\renewcommand{\leq}{\leqslant}
 \def\ps@IEEEtitlepagestyle{%
   \def\@oddfoot{\mycopyrightnotice}%
   \def\@evenfoot{}%
 }
 \def\mycopyrightnotice{%
   {\footnotesize This work has been submitted to the IEEE for possible publication. Copyright may be transferred without notice, after which this version may no longer be accessible.\hfill}
     \gdef\mycopyrightnotice{}
   }
\newcommand{\myinput}[1]{}
\newtheorem{prop}{Proposition}
\newtheorem{thm}{Theorem}
\newtheorem{cor}{Corollary}
\theoremstyle{remark}
\newtheorem{exa}{Example}
\theoremstyle{definition}
\newtheorem{definition}{Definition}
\newcommand{\dom}{\operatorname{dom}}
\newcommand{\R}{\mathbb{R}}
\newcommand{\inner}[2]{\langle #1,#2\rangle}
\newcommand{\Rp}{\R_{\geqslant 0}}
\newcommand{\Z}{\mathbb{Z}}
\newcommand{\Zp}{\mathbb{N}}
\newcommand{\bba}{\bm{\alpha}}
\newcommand{\bbdelta}{\bm{\delta}}
\newcommand{\bbz}{\mathbf{z}}
\newcommand{\bbx}{\mathbf{x}}
\newcommand{\lse}{\mathrm{LSE}}
\newcommand{\dlse}{\mathrm{DLSE}}
\newcommand{\lset}{\lse_{T}}
\newcommand{\dlset}{\dlse_{T}}
\newcommand{\Rpp}{\R_{>0}}
\newcommand{\mLD}{\mathcal{L}}
\newcommand{\bbv}{\mathbf{v}}
\newcommand{\eps}{\varepsilon}
\newcommand{\pos}{\mathrm{POS}}
\newcommand{\gpos}{\mathrm{GPOS}_T}
\newcommand{\bbq}{\mathbf{q}}
\newcommand{\bbb}{\bm{\beta}}
\newcommand{\bbd}{\bm{\delta}}
\newcommand{\bbc}{\bm{\chi}}
\newcommand{\bbg}{\bm{\gamma}}
\newcommand{\ffnn}{\mathrm{FFNN}}
\newcommand{\bbxi}{\bm{\xi}}
\newcommand{\obba}{\overrightarrow{\bba}}
\newcommand{\obbg}{\overrightarrow{\bbg}}
\newcommand{\mK}{\mathcal{K}}
\newcommand{\calD}{{\mathcal D}}
\newcommand{\beq}{\begin{equation}}
\newcommand{\eeq}{\end{equation}}
\DeclareMathOperator*{\argmin}{arg\,min}
\title{A Universal Approximation Result \\ for Difference of log-sum-exp Neural Networks}
\author{\thanks{G. C. Calafiore and C. Possieri are with the Dipartimento di Elettronica e Telecomunicazioni, Politecnico di Torino, 10129 Turin, Italy (e-mails: [giuseppe.calafiore, corrado.possieri]@polito.it).
G.C. Calafiore is also with IEIIT-CNR Torino, 10129 Turin, Italy.
S. Gaubert is with INRIA and CMAP, Ecole polytechnique, UMR 7641 CNRS, France (e-mail: stephane.gaubert@inria.fr).}
Giuseppe C. Calafiore, \IEEEmembership{Fellow,~IEEE}, Stephane Gaubert, \IEEEmembership{Member,~IEEE} \\and Corrado Possieri, \IEEEmembership{Associate Member,~IEEE}}
\begin{document}

\maketitle

\begin{abstract}
We  show that a neural network whose output is obtained
as the difference of the outputs of two feedforward networks with exponential activation function
in the hidden layer and logarithmic activation function in the output node (LSE networks) is a smooth universal approximator
of continuous functions over convex, compact sets. By using a logarithmic transform, this class of
networks maps to a family of subtraction-free ratios of generalized posynomials, which we also show to be
universal approximators of positive functions over log-convex, compact subsets of the positive orthant. 
The main advantage of Difference-LSE networks with respect to classical feedforward neural networks
is that, after a standard training phase,  they provide  surrogate models   for design that possess a specific 
 difference-of-convex-functions form, which makes them optimizable via  relatively efficient numerical methods.
In particular, by adapting an existing  difference-of-convex algorithm  to these models, we obtain an algorithm for performing effective optimization-based design. 
We illustrate the proposed approach by applying it to  data-driven design of a diet for a patient with type-2 diabetes.
\end{abstract}

\begin{IEEEkeywords}
Feedforward neural networks, Universal approximation, LSE networks, Surrogate models, Subtraction-free expressions, DC programming, Data-driven optimization.
\end{IEEEkeywords}

\section{Introduction}
\subsection{Motivation}
A well-known and compelling  property of feedforward neural network ($\ffnn$) models is that they are capable
of approximating any continuous function over a compact set. Indeed, classical results in, e.g., \cite{cybenko1989approximation,hornik1989multilayer},
show that, given any non-constant, bounded and continuous function, there exists a $\ffnn$ 
with a single hidden layer that can approximate it over a compact set.
This \emph{universal approximation} capability, together with the development of efficient algorithms to tune the network weights,
paved the way to the efficient application of artificial neural networks in several frameworks, such as circuit design  \cite{zaabab1995neural},
control and identification of nonlinear systems \cite{221350}, optimization over graphs \cite{peterson1989new}, and many others.

However, when the goal of a neural network model is to construct a \emph{surrogate model} for describing, and then optimizing, a complex input-output relation,
it is of crucial importance that the structure of the model be well tailored for the subsequent numerical optimization phase. This is not usually the case for generic $\ffnn$. 
Indeed,  if the input-output  model does not satisfy 
certain properties (such as, for instance, convexity \cite{boyd2004convex}), then designing the input so that the output is minimized, possibly under additional design constraints, can be an extremely difficult task.

In \cite{CaGaPo:18}, we  showed that {$\lset$-networks}, that is,
$\ffnn$ with exponential activation functions in the inner layer and 
logarithmic activation function in the output neuron, parametrized
by a positive ``temperature'' parameter $T>0$,
provide a smooth convex model capable of approximating any convex function over a convex, compact set. Maps in the $\lset$ class are precisely log-Laplace
transforms of nonnegative measures with finite support, a remarkable class of maps enjoying smoothness and strict convexity properties.
We showed in particular that if the data to be approximated satisfy some
convexity assumptions, such network structures can be readily exploited to perform 
data-driven design by using convex optimization tools.  
Nevertheless, most real-life input-output maps are of nonconvex nature, hence while they might still be approximated via a convex model, such an approximation  may not yield a desirable accuracy. 

\subsection{Contribution}
The purpose of this paper is to propose a new type of neural network model, here named {\em Difference-LSE  network}
 ($\dlset$), which is constructed by taking the  difference of the outputs of two $\lset$ networks.
First, we prove that $\dlset$ networks guarantee universal approximation capabilities (thus overcoming the limitations of  plain convex  $\lset$ networks),
see \Cref{thm:dlse}.
By using a logarithmic transformation,  $\dlset$ networks map to a family of ratios of generalized posynomials functions,
which we show to be \emph{subtraction free} universal approximators of positive functions over compact subsets of the positive orthant. Subtraction free
expressions are fundamental objects in algebraic complexity, studied
in particular in~\cite{fomin}. It is a result of independent interest
that subtraction free expressions provide universal approximators.

Moreover, we show that Difference-LSE network are of practical interest,
as they have a structure which is amenable to effective optimization over the inputs by using
``DC-programming'' methods, as discussed in Section~\ref{sec:design}. 

Training and subsequent optimization of $\dlset$ networks has been implemented 
in a numerical Matlab toolbox\footnote{See \url{https://github.com/Corrado-possieri/DLSE_neural_networks}.} named \texttt{DLSE\_Neural\_Network} that we made publicly  available.
The theoretical results in the paper are illustrated 
 by an example 
dealing with  data-driven design of a diet for a patient with type 2 diabetes.

\subsection{Related work}
Similar to our previous work~\cite{CaGaPo:18} on which it builds,
the present paper is inspired by ideas from tropical geometry and max-plus algebra.
The class of functions in $\lse$ that we study here plays a key role in Viro's patchworking methods~\cite{viro,itenberg} for real curves. 
We note that the application of tropical geometry to neural networks
is an emerging topic: 
at least two recent works have used tropical methods
to provide combinatorial estimates, in terms of Newton polytopes,
of the ``classifying power'' of neural networks
with piecewise affine functions, see~\cite{charisopoulos2017morphological}, \cite{lim}. Other related results concern the ``zero-temperature'' ($T=0$)
limit of the approximation problem that we consider, i.e., the representation of piecewise linear functions by elementary expression involving min, max, and affine terms~\cite{ovchinnikov,wang}, and the approximation of functions
by piecewise linear functions. In particular, Th.~4.3 of \cite{Goodfellow}
shows that any continuous function can be approximately
arbitrarily well on a compact domain by a difference
of piecewise linear convex functions.
A related approximation result, still concerning differences
of structured piecewise linear convex functions, has appeared in~\cite{1903.08072}. In contrast, our results provide approximation
by differences of a family of {\em smooth} structured convex functions.
This smoothing is essential when deriving
universal approximation results by subtraction
free rational expresssions.

\section{Notation and technical preliminaries\label{sec:notation}}
Let $\Zp$, $\Z$, $\R$, $\Rp$, and $\Rpp$ denote the set of natural, integer, real, nonnegative real,
and positive real numbers, respectively.
%
Given a function $\phi:\R^n\rightarrow\R\cup\{+\infty\}$, we define its domain as 
 $\dom \phi\doteq\{\bbx\in\R^n:\phi(\bbx)<+\infty \}$.
If the function $\phi:\R^n\rightarrow\R\cup\{+\infty\}$ is differentiable at
point $\bbx$, we denote by $\nabla\phi(\bbx)$ its  {gradient} at $\bbx$.


\subsection{Log-Sum-Exp functions}
\label{subsec:lse}
Following~\cite{CaGaPo:18},
we define $\lse$ (Log-Sum-Exp) as the  class of  functions $f:\R^n\rightarrow\R$ that can be written as
\begin{equation}
f(\bbx) = \log \left(\sum_{k=1}^K b_k \exp ( \inner{\bba^{(k)}}{\bbx} )\right),
\label{eq:lse_lse_reg}
\end{equation}
for some $K\in\Zp$, $b_k\in\Rpp$, $\bba^{(k)}=[\begin{array}{ccc}
\alpha_1^{(k)} & \cdots & \alpha_n^{(k)}
\end{array}]^\top\in\R^n$, $k=1,\dots,K$, where
 $\bbx=[\begin{array}{ccc}
x_1 & \cdots & x_n
\end{array}]^\top$ is a vector of variables. Further, given $T\in\Rpp$, we define  $\lset$ as the class of  functions $f_T:\R^n\rightarrow\R$ that can be written as
\begin{equation}
f_T(\bbx) = T \log \left(\sum_{k=1}^K b_k^{1/T} \exp ( \inner{\bba^{(k)}}{\bbx/T} )\right),
\label{eq:lse_lse_reg_trop}
\end{equation}
for some $K\in\Zp$, $b_k\in\Rpp$, and $\bba^{(k)}\in\R^n$, $k=1,\dots,K$. 
By letting
$\beta_k \doteq \log b_k$, $k=1,\ldots,K$,
we have that functions in the family  $\lset$ can be equivalently parameterized as
\begin{equation}
f_T(\bbx) = T \log \left(\sum_{k=1}^K \exp ( \inner{\bba^{(k)}}{\bbx/T} + \beta_k/T )\right),
\label{eq:lse_lse_reg_trop_exp}
\end{equation}
where the $\beta_k$s have no sign restrictions.
It may sometimes be convenient to highlight the full parameterization 
of $f_T$, in which case we shall write $f_T^{(\obba,\bbb)}$,
where $\obba = (\bba^{(1)},\ldots,\bba^{(K)})$, and
$\bbb =  (\beta_1,\ldots,\beta_K)$.
It can be readily observed that,  for any $T>0$, the following property holds:
\beq
f_T^{(\obba,\bbb)} (\bbx) =  T f_1^{(\obba,\bbb/T)} (\bbx/T).
\label{eq:scaling}
\eeq
The maps in $\lset$ are special instances of the log-Laplace transforms
of nonnegative measures, studied in~\cite{klartag2012centroid}.  In particular,
the maps in $\lset$ are smooth, they are convex (this is an easy consequence of Cauchy-Schwarz inequality), and they are even strictly convex if the vectors $\obba^{(k)}$ constitute an affine generating family of $\R^n$, see~\cite[Prop.~1]{CaGaPo:18}.
Maps of this kind play a key role in tropical geometry,
in the setting of Viro's patchworking method~\cite{viro},
dealing with the degeneration of real algebraic curves to a piecewise
linear limit. We note in this respect that the family of functions $(f_T)_{T>0}$ given by~\eqref{eq:lse_lse_reg_trop_exp} converges uniformly on $\R^n$, as $T\to 0^+$, to the function
\[
f_0(\bbx) \doteq  \max_{1\leq k\leq K} \big(
\inner{\bba^{(k)}}{\bbx} + \beta_k\big).
\]
Actually, the following inequality holds for all $T>0$
\begin{equation}
  f_0(\bbx) \leq f_T(\bbx) \leq T\log K + f_0(\bbx),
  \label{eq:lse_lse_reg_trop_exp-approx}
\end{equation}
see \cite{CaGaPo:18} for details and background.

\subsection{Posynomials and $\gpos$ functions}
Given $c_k > 0$ and $\bba^{(k)}\in\R^n$, a \emph{positive monomial}
is a product of the form $c_k\bbx^{\bba^{(k)}} = c_k x_1^{\alpha_1^{(k)}}x_2^{\alpha_2^{(k)}}\cdots x_n^{\alpha_n^{(k)}}$.
A \emph{posynomial} is a finite sum of positive monomials,
\begin{equation}
\psi(\bbx) = \sum_{k=1}^K  c_k \bbx^{\bba^{(k)}}.
\label{eq:POS}
\end{equation}
We denote by $\pos$ the class of  functions $\psi:\Rpp^n\rightarrow\Rpp$ of the form
(\ref{eq:POS}).
 Posynomials are {\em log-log-convex} functions, meaning that the log of a posynomial $\psi$ is convex in the log of its argument, see, e.g., Section~II.B of \cite{CaGaPo:18}.
  We denote by $\gpos$ the class of functions that can be expressed as
  \begin{equation}
\psi_T(\bbx) = (\psi(\bbx^{1/T}))^T
\label{eq:genposy}
\end{equation}
for some  $T>0$ and $\psi\in\pos$.
 These functions are log-log-convex, and they form a subset of the so-called generalized posynomial functions, see, e.g., Section~II.B of \cite{CaGaPo:18}.
It is observed in Proposition 3 of \cite{CaGaPo:18} that  $\lset$ and $\gpos$ functions are related by a one-to-one correspondence. That is,
for any $f(\bbx)\in\lset$ and $\psi(\bbz)\in \gpos$ it holds that
\begin{align*}
\exp \left( f\left( \log(\bbz)\right)  \right) & \in \gpos, &
\log \left( \psi \left( \exp(\bbx)\right)  \right)& \in \lset.
\end{align*}

\section{A universal approximation theorem}
\label{sec_DLSE}

\subsection{Preliminary: approximation of convex functions}
\label{ssec_convexapprox}
We start by recalling a key result of \cite{CaGaPo:18}, stating  that functions in $\lset$ are universal smooth approximators of convex functions,  see Theorem~2 in \cite{CaGaPo:18}.

\begin{thm}[Universal approximators of convex functions,  \cite{CaGaPo:18}]\label{thm:LSEapprox}
  Let $\phi$ be a real valued continuous convex function defined
  on a compact convex subset ${\mathcal K} \subset \R^n$.
Then, for all $\eps > 0$ there exist $T>0$ and a function $f_T \in \lse_T$
such that
\begin{align}
  |f_T(\bbx) - \phi(\bbx)| \leqslant \eps,\quad\text{ for all }\; \bbx\in {\mK} .\label{e-defap}
\end{align}
\end{thm}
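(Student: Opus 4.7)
The plan is to prove the theorem by a two-stage construction: first approximate $\phi$ from below by a finite pointwise maximum $f_0$ of affine functions, then smooth $f_0$ into an $\lset$ function via the sandwich inequality~\eqref{eq:lse_lse_reg_trop_exp-approx} already displayed in the excerpt. Concretely, the target form of the approximation is
\[
\phi(\bbx) \;\approx\; f_0(\bbx) \;=\; \max_{1\leq k\leq K}\bigl(\inner{\bba^{(k)}}{\bbx}+\beta_k\bigr) \;\approx\; f_T(\bbx),
\]
with the first $\approx$ controlled by a piecewise-affine approximation argument, and the second by sending the temperature $T$ to $0^+$.

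For the first stage, I would use the fact that a continuous convex function on a convex set is the upper envelope of its affine minorants. For each $\bbx_0$ in the relative interior of $\mK$ there is an affine function $L_{\bbx_0}(\bbx)=\inner{\bba_{\bbx_0}}{\bbx}+\beta_{\bbx_0}$ with $L_{\bbx_0}(\bbx_0)=\phi(\bbx_0)$ and $L_{\bbx_0}\leq\phi$ on $\mK$ (taking $\bba_{\bbx_0}$ in the subdifferential of $\phi$ at $\bbx_0$). Continuity of $\phi$ and $L_{\bbx_0}$ yields an open neighborhood $U_{\bbx_0}$ on which $\phi-L_{\bbx_0}\leq \eps/2$. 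Since $\mK$ is compact, finitely many $U_{\bbx_1},\ldots,U_{\bbx_K}$ cover $\mK$; setting $f_0:=\max_k L_{\bbx_k}$, we obtain $f_0\leq \phi$ and $\phi-f_0\leq \eps/2$ pointwise on $\mK$.

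For the second stage, let $f_T\in\lset$ be the log-sum-exp built from the very same affine data $(\bba^{(k)},\beta_k)_{k=1}^K$ used to define $f_0$. The sandwich inequality~\eqref{eq:lse_lse_reg_trop_exp-approx} gives $0\leq f_T(\bbx)-f_0(\bbx)\leq T\log K$ uniformly in $\bbx\in\R^n$. Choosing $T>0$ small enough that $T\log K \leq \eps/2$ and combining with the bound from the first stage via the triangle inequality yields $|f_T(\bbx)-\phi(\bbx)|\leq \eps$ on $\mK$, which is the desired conclusion.

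The main technical obstacle is that the subgradient construction in the first stage is only directly available at points of the relative interior of $\mK$: a continuous convex function on a compact convex set need not admit an exact affine minorant tangent at a boundary point. To handle this cleanly, I would first work on a slightly shrunk compact convex set $\mK_\delta\subset\mathrm{ri}(\mK)$ chosen so that, by uniform continuity of $\phi$, $\phi$ varies by at most $\eps/4$ between $\mK$ and $\mK_\delta$; then apply the subgradient/covering argument on $\mK_\delta$ and control the error on the thin boundary shell $\mK\setminus\mK_\delta$ by continuity. An alternative would be to regularize $\phi$ by an infimal convolution with a small quadratic, producing a function differentiable everywhere on an open convex neighborhood of $\mK$ on which subgradients exist at every point; both routes reduce the problem to a pure interior argument followed by the smoothing bound~\eqref{eq:lse_lse_reg_trop_exp-approx}.
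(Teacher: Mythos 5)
Your two-stage plan --- a finite max of affine minorants $f_0\leq\phi$ with $\phi-f_0\leq\eps/2$, followed by the smoothing bound $f_0\leq f_T\leq f_0+T\log K$ from~\eqref{eq:lse_lse_reg_trop_exp-approx} --- is precisely the argument behind Theorem~2 of \cite{CaGaPo:18}, which the present paper cites rather than reproves; so the architecture is the intended one, and your second stage is complete as written. The one step that does not work as literally described is the claim that the error on the shell $\mK\setminus\mK_\delta$ is controlled ``by continuity.'' Writing $\phi(\bbx)-f_0(\bbx)\leq[\phi(\bbx)-\phi(\bbx')]+[\phi(\bbx')-f_0(\bbx')]+[f_0(\bbx')-f_0(\bbx)]$ for a nearby $\bbx'\in\mK_\delta$, the first two brackets are fine, but the third is governed by the slopes of the minorants, i.e.\ by subgradients of $\phi$ taken at distance $O(\delta)$ from the relative boundary; these can be of size $1/\delta$ (consider $\phi(x)=-\sqrt{1-x^2}$ on $[-1,1]$), so against a shell of width $O(\delta)$ the product is $O(1)$ and does not vanish as $\delta\to 0$. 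Continuity of $\phi$ alone therefore does not close this step.

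The gap is closable in several ways. One is to take $\bbx'=(1-\delta)\bbx+\delta\bbx_c$ for a fixed center $\bbx_c$ and exploit convexity along the segment $[\bbx_c,\bbx]$: an affine minorant $L$ with $L(\bbx')=\phi(\bbx')$ and $L(\bbx_c)\leq\phi(\bbx_c)$ satisfies $L(\bbx)=(L(\bbx')-\delta L(\bbx_c))/(1-\delta)\geq(\phi(\bbx')-\delta\phi(\bbx_c))/(1-\delta)\geq\phi(\bbx)-\omega(\delta\operatorname{diam}\mK)-O(\delta)$, where $\omega$ is the modulus of continuity of $\phi$; this is a continuity-plus-convexity argument, not continuity alone. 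More simply, one can avoid shrinking altogether by separating the point $(\bbx_0,\phi(\bbx_0)-\eps/2)$ from the closed epigraph of $\phi$: the separating hyperplane cannot be vertical because $\bbx_0\in\mK$, so every point of $\mK$, boundary included, directly admits an affine minorant that is $\eps/2$-exact there, and the compactness/covering argument then runs on all of $\mK$. Your alternative fix via infimal convolution with a small quadratic is also sound and is arguably the cleanest of the three.
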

We now extend the above result by showing that it actually holds for the restricted class of $ \lse_T$ with rational parameters. This extension will allow
us to apply our results to the approximation by subtraction
free expressions. 

\begin{definition}
A function $f_T\in\lse_T$ has {\em rational parameters} if  $T>0$ is a rational number and $f_T=f_T^{(\obba,\bbb)}$
is of the form~(\ref{eq:lse_lse_reg_trop_exp})
 where the vectors $\bba^{(1)},\ldots,\bba^{(K)}$ have rational entries, and $\beta_1,\dots,\beta_K$ are rational numbers.
We shall also say that $f_T$ is a $\varepsilon$-approximation of $f$ on $\mathcal{K}$ when~\eqref{e-defap} holds.
\end{definition}
The following corollary holds.

\begin{cor}[LSE approximation with rational parameters]
\label{cor:ratLSE}
Under the hypotheses of \Cref{thm:LSEapprox}, 
for all $\eps > 0$ there exists a rational 
$T>0$ and
a function $f_T \in \lse_T$ with rational parameters  such that 
\eqref{e-defap} holds. Moreover, $T$ may be chosen of the form $1/p$
where $p$ is a positive integer.
 \end{cor}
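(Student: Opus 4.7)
The plan is to start from the real-parameter approximation furnished by \Cref{thm:LSEapprox} and rationalize in two successive steps: first, perturb all parameters to nearby rationals by appealing to joint continuity of the LSE formula; second, perform an algebraic ``repackaging'' that leaves the function unchanged but forces the temperature to assume the prescribed form $1/p$ with $p\in\Zpp$.

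First, apply \Cref{thm:LSEapprox} with tolerance $\varepsilon/2$ to obtain a real $T_0>0$, an integer $K$, and parameters $\obba=(\bba^{(1)},\dots,\bba^{(K)})$, $\bbb=(\beta_1,\dots,\beta_K)$ such that $|f_{T_0}^{(\obba,\bbb)}(\bbx)-\phi(\bbx)| \le \varepsilon/2$ on $\mK$. From the explicit formula~\eqref{eq:lse_lse_reg_trop_exp}, the evaluation map
\[
(T,\obba,\bbb,\bbx)\;\mapsto\; T \log \sum_{k=1}^{K} \exp\bigl((\langle \bba^{(k)},\bbx\rangle + \beta_k)/T\bigr)
\]
is jointly continuous on $\Rpp \times (\R^n)^K \times \R^K \times \R^n$, hence uniformly continuous on any compact neighbourhood of $(T_0,\obba,\bbb)\times\mK$. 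Density of $\mathbb Q$ in $\R$ then allows us to pick a rational $T_1>0$ and rational-valued $\obba_1,\bbb_1$ close enough to $(T_0,\obba,\bbb)$ so that
\[
\sup_{\bbx\in\mK}\bigl|f_{T_1}^{(\obba_1,\bbb_1)}(\bbx)-f_{T_0}^{(\obba,\bbb)}(\bbx)\bigr|\le \varepsilon/2,
\]
whence $|f_{T_1}^{(\obba_1,\bbb_1)}-\phi| \le \varepsilon$ on $\mK$ by the triangle inequality.

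Second, I force the rational $T_1$ to take the form $1/p$. Writing $T_1 = m/p$ with $m,p\in\Zpp$ and setting $T:=1/p$, so that $T_1 = mT$, the elementary identity $mT \log S = T\log S^m$ applied to
\[
S \;=\; \sum_{k=1}^K \exp\bigl((\langle\bba_1^{(k)},\bbx\rangle+\beta_{1,k})/T_1\bigr),
\]
combined with the multinomial expansion
\[
S^{m} \;=\; \sum_{(k_1,\dots,k_m)\in\{1,\dots,K\}^m} \exp\!\left(\frac{\langle \tilde{\bba}^{(k_1,\dots,k_m)},\bbx\rangle + \tilde{\beta}_{(k_1,\dots,k_m)}}{T}\right),
\]
with $\tilde{\bba}^{(k_1,\dots,k_m)}:=(\bba_1^{(k_1)}+\cdots+\bba_1^{(k_m)})/m$ and $\tilde{\beta}_{(k_1,\dots,k_m)}:=(\beta_{1,k_1}+\cdots+\beta_{1,k_m})/m$, exhibits $f_{T_1}^{(\obba_1,\bbb_1)}$ as an element of $\lse_T$ with $K^m$ exponential terms and rational parameters (averages of rationals remain rational). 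Since the function itself is unchanged, the $\varepsilon$-bound is preserved and $T=1/p$ with $p\in\Zpp$, as required.

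The only nontrivial ingredient is the repackaging identity in the second step: it trades a (combinatorial) blow-up in the number of exponential terms for the normalization of the temperature to a reciprocal of an integer, which is harmless since the statement places no bound on $K$. The continuity argument in the first step is entirely routine, relying only on uniform continuity of a smooth map on a compact set.
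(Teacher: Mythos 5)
Your proof is correct, but it reaches the conclusion by a genuinely different route than the paper, in two respects. For rationalizing the parameters, the paper uses the explicit nonexpansiveness estimate \eqref{e-sup} (the map $\bbxi\mapsto T\log\sum_k \exp(\xi_k/T)$ is order preserving and commutes with addition of constants, hence $1$-Lipschitz in the sup norm), which gives the quantitative bound $\max_k\|\bba^{(k)}-\bbg^{(k)}\|R+\max_k|\beta_k-\delta_k|$ on the perturbation error over $\mK$; your soft uniform-continuity-on-a-compact argument establishes the same fact non-quantitatively, and is equally valid since $K$ is fixed so the parameter space is finite dimensional. The more substantive difference is how $T=1/p$ is obtained: the paper gets it by inspecting the proof of Theorem~2 of \cite{CaGaPo:18}, which shows the approximation holds for \emph{all} sufficiently small $T$, so one may take $T=1/p$ from the outset; you instead treat \Cref{thm:LSEapprox} as a black box, land on an arbitrary rational $T_1=m/p$, and then normalize the temperature via the identity $mT\log S=T\log S^m$ combined with the multinomial expansion. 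That repackaging is precisely a proof of the nesting property $\lse_{mT}\subset\lse_T$ (Lemma~1 of \cite{CaGaPo:18}, which the paper invokes later in the proof of \Cref{thm:dlse}), so your argument is more self-contained---it never opens up the proof of the cited approximation theorem---at the harmless cost of inflating the number of exponential terms from $K$ to $K^m$.
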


\begin{proof}
First, inspecting the proof of Theorem~2 in \cite{CaGaPo:18}, one obtains that
$\phi$  can be approximated uniformly  by a map $f_T\in \lse_T$,
for all $T>0$ small enough, hence we can always assume that $T$ is
of the form $1/p$ for some positive integer $p$.
It then remains to be proved that the approximation result still holds if also 
$\bba^{(1)},\ldots,\bba^{(K)}$ and $\beta_1,\dots,\beta_K$ are rational.
To this end, let us study the effect of a perturbation of these parameters
 on the map $f_T$.
 Observe that the map $\varphi: \R^K\to \R$, 
 $\bbxi=(\xi_1,\dots,\xi_K)\mapsto T \log (\sum_{k=1}^{ K} \exp(\xi_k/T))$ satisfies
  \begin{align}
    |\varphi(\bbxi)-\varphi(\bbxi')|\leqslant \|\bbxi-\bbxi'\|_\infty\label{e-sup}
  \end{align}
  for all $\bbxi\in \R^K$, where $\|\cdot\|_\infty$ is the sup-norm.
 This follows from the fact that $\varphi$
  is order preserving and commutes with the addition of a constant, see
 e.g.\ \cite{CT80} and Section~2 of~\cite{1605.04518}. It follows from~\eqref{e-sup} that if
  $f_T$ is as in~\eqref{eq:lse_lse_reg_trop_exp}, and if
  \[
  g_T(\bbx) = T \log \left(\sum_{k=1}^K \exp \Big( \inner{\bbg^{(k)}}{\bbx/T} + \delta_k/T \Big)\right) ,
  \]
  then, letting
  $R \doteq \max_{\bbx\in \mathcal{K}} \|\bbx\| $,
 $ \obbg=(\bbg^{(1)},\ldots, \bbg^{(K)})$, and $\bbdelta = (\delta_1,\ldots,\delta_K)$,
  we get
  \begin{align*} 
    |f_T(\bbx) - g_T(\bbx)|&\leqslant \kappa((\obba,\bbb),(\obbg,\bbdelta))\\
    & \!\!\!\!\!\!\!\!\!\doteq  \max_{1\leqslant k\leqslant K} \|\bba^{(k)}-\bbg^{(k)}\|R
  + \max_{1\leqslant k\leqslant K} |\beta_k-\delta_k|  ,
  \end{align*}
  for all $\bbx\in \mathcal{K}$.
  Hence, choosing $(\obbg,\bbdelta)$ to be a rational approximation
  of $(\obba,\bbb)$ such that $\kappa((\obba,\bbb),(\obbg,\bbdelta))\leqslant \varepsilon$, and supposing that $f_T$ is a $\varepsilon$-approximation of $\phi$ on $\mathcal{K}$, we deduce that $g_T$ is a $2\varepsilon$-approximation of $\phi$ on $\mathcal{K}$, from
  which the statement of the corollary follows.
  \end{proof}

\subsection{Approximation of general continuous functions}
\label{ssec_generalapprox}
This section contains our main result on universal approximation of continuous functions.
To this end, we first define the class of functions that can be expressed as the difference of two
functions in $\lse_T$.

\begin{definition}[$\dlse_T$ functions]
We say that a function $\phi:\R^{n}\to \R$ belongs to the $\dlse_T$ class, if
$\phi = g_T - h_T$, for some $g_T, h_T\in\lse_T$.
Further, we say that  $\phi$ has rational parameters, if 
 $g_T$ and $h_T$ have rational parameters.
\end{definition}

The following result  shows
that any  continuous function can  be
approximated uniformly by a function in a $\dlse_T$ class.

\begin{thm}[Universal approximation property of $\dlse_T$]\label{thm:dlse}
  Let $\phi$ be a real-valued continuous function defined
  on a compact, convex subset ${\mathcal K} \subset \R^n$.
Then, for any $\eps > 0$ there exist
a function $f_T\in \dlse_T$
with rational parameters, for some $T=1/p$ where $p$ is a positive integer,
such that $|f_T(\bbx) -  \phi(\bbx)| \leqslant \eps$, $\forall\bbx\in {\mathcal K}$. 
\end{thm}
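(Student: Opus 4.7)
The plan is to reduce the general continuous case to the convex case already handled by \Cref{cor:ratLSE}, via a classical difference-of-convex (DC) decomposition obtained from polynomial approximation.

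First, by the Stone-Weierstrass theorem, one can approximate $\phi$ uniformly on the compact set $\mathcal{K}$ by a polynomial $p\colon \R^n\to \R$ so that $|\phi(\bbx)-p(\bbx)|\le \varepsilon/3$ for all $\bbx\in \mathcal{K}$. Since $\mathcal{K}$ is compact and $p$ is $C^2$, the Hessian $\nabla^2 p$ is bounded in operator norm on $\mathcal{K}$, so one can pick $\lambda>0$ large enough that $\nabla^2 p(\bbx)+2\lambda I \succeq 0$ for every $\bbx\in \mathcal{K}$. Setting $g(\bbx)\doteq p(\bbx)+\lambda\|\bbx\|^2$ and $h(\bbx)\doteq \lambda\|\bbx\|^2$, both $g$ and $h$ are continuous and convex on $\mathcal{K}$, and $p=g-h$ there.

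Second, I apply \Cref{cor:ratLSE} to $g$ and to $h$ separately, with tolerance $\varepsilon/3$. The delicate point — and the main thing to check — is that one needs a \emph{common} temperature parameter $T=1/p$ for both approximations, so that the resulting difference lies in a single $\dlse_T$ class. This is ensured by the observation, recorded in the proof of \Cref{cor:ratLSE}, that the original \Cref{thm:LSEapprox} yields approximation by some $f_T\in \lse_T$ for \emph{all} $T>0$ small enough; one may therefore pick a single $T=1/p$ with $p$ a sufficiently large positive integer that simultaneously works for $g$ and for $h$. This produces $g_T,h_T\in \lse_T$ with rational parameters and
\[
|g_T(\bbx)-g(\bbx)|\le \varepsilon/3,\qquad |h_T(\bbx)-h(\bbx)|\le \varepsilon/3,
\qquad \forall \bbx\in \mathcal{K}.
\]

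Finally, set $f_T\doteq g_T-h_T$, which by definition belongs to $\dlse_T$ with rational parameters. The triangle inequality gives
\[
|f_T(\bbx)-\phi(\bbx)|\le |g_T(\bbx)-g(\bbx)|+|h_T(\bbx)-h(\bbx)|+|p(\bbx)-\phi(\bbx)|\le \varepsilon
\]
on $\mathcal{K}$, completing the argument. The only real obstacle in this plan is the simultaneous-$T$ issue above; everything else reduces to classical polynomial approximation and the elementary fact that any $C^2$ function on a compact convex set admits an explicit DC decomposition via a quadratic shift.
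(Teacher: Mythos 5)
Your proof is correct, but it follows a genuinely different route from the paper's. The paper's proof invokes an abstract result (Proposition~2.2 of its reference [borwein]) asserting that a continuous function on a compact convex set admits a decomposition $\phi=g-h$ with $g,h$ continuous convex, applies \Cref{cor:ratLSE} to $g$ and $h$ with tolerance $\eps/2$ at possibly different temperatures $T'$ and $T''$, and then reconciles the temperatures via the \emph{nesting property} (Lemma~1 of \cite{CaGaPo:18}): choosing a rational $T$ of which $T'$ and $T''$ are integer multiples forces $\lse_{T'}\cup\lse_{T''}\subset\lse_T$. You instead first replace $\phi$ by a Stone--Weierstrass polynomial approximant and produce an \emph{explicit} DC decomposition by a quadratic shift $p=(p+\lambda\|\cdot\|^2)-\lambda\|\cdot\|^2$, and you resolve the common-temperature issue by observing that the underlying convex approximation works for all sufficiently small $T$, so a single $T=1/p$ can be chosen for both pieces. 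Both resolutions of the simultaneous-$T$ point are legitimate, and you were right to flag it as the delicate step. Your route arguably buys something: it only needs the (elementary, constructive) fact that DC functions are \emph{dense} in $C(\mK)$, rather than an exact DC representation of an arbitrary continuous function --- note that a genuine DC function is locally Lipschitz and differentiable almost everywhere, so the exact representation cannot hold for, say, a nowhere-differentiable continuous $\phi$; your $\eps/3$ budgeting quietly sidesteps this, at the modest cost of an extra approximation layer. The only cosmetic blemish is the double use of the symbol $p$ for both the polynomial and the denominator of $T$.
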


\begin{proof}
  A classical result of convex analysis states that any continuous function $\phi$ defined
  on a compact convex subset $\mathcal{K}$ of $\R^n$ can be written as the difference $g-h$ where $g,h$ are continuous, convex functions defined on $\mathcal{K}$, see, e.g., Proposition~2.2 of \cite{borwein}. Then, by \Cref{cor:ratLSE},
  for all $\varepsilon>0$, we can find a rational $T'>0$ and a function $g_{T'}\in\lse_{T'}$ with rational parameters
  such that $|g(\bbx)-g_{T'}(\bbx)|\leqslant \varepsilon/2$ holds
  for all $\bbx\in\mathcal{K}$. 
  Similarly, we can find
  a rational $T''  >0$ and a function $h_{T''}\in\lse_{T''}$ with rational parameters
  such that $|h(\bbx)-h_{T''}(\bbx)|\leqslant \varepsilon/2$ holds
  for all $\bbx\in\mathcal{K}$. 
  Hence, by taking any rational $T>0$ such that $T'$ and $T''$ are integer
  multiples of $T$, it follows from the nesting property in Lemma~1 of \cite{CaGaPo:18}
  that  $g_{T'}$ and $h_{T''}$ both belong to
  $\lse_{T}$. Thus, there exist a rational $T>0$ and
  $g_T,h_T\in\lse_T$ such that, for all $\bbx\in\mathcal{K}$,
 \begin{align*} 
    \vert  g(\bbx)-g_{T}(\bbx) \vert  &\leqslant \varepsilon/2 &
    \vert  h_{T}(\bbx)  -h(\bbx)\vert &\leqslant  \varepsilon/2.
 \end{align*} 
  Summing these conditions we obtain that 
  $| (g(\bbx)- h(\bbx)) - (g_{T}(\bbx) -h_{T}(\bbx)  ) |\leqslant \varepsilon$, for all $\bbx\in\mathcal{K}$.
   The claim then immediately follows 
    by recalling that $g(\bbx)- h(\bbx) = \phi(\bbx)$ for all $\bbx\in\mathcal{K}$, and letting $f_T\doteq g_{T} -h_{T}$, whence $f_T\in \dlse_T$.
\end{proof}
The following explicit example illustrates the approximation
of a non-convex and nondifferentiable function
by a function in $\lset$.
\begin{exa}
  Consider
  \[ \phi(x)=\max(0,\min(x,1))
   .
  \]
  Observe that $\phi(x)=\max(0,x)-\max(0,x-1)$, which is indeed  a difference of two nonsmooth convex
  functions.   By using~\eqref{eq:lse_lse_reg_trop_exp-approx}, we can approximate
  each term of this difference by a function in $\lse$ as
  \begin{align*}
  \max(0,x) &\leq T\log(1+\exp( x/ T)) \\
  &\qquad \leq T\log 2+ \max(0,x),\\
  \max(0,x-1) &\leq T\log(1+\exp({(x-1)}/T)) \\
  &\qquad \leq T\log 2+ \max(0,x-1).
  \end{align*}
  
  It follows that the map
  \[ f_{T}\doteq T(\log(1+\exp( x/ T)\big)- \log\big(1+\exp({(x-1)}/T)) )
  \]
  is in $\dlse_{T}$ and
  satisfies the following uniform approximation property
  of $\phi$:
  \[ -T\log 2 + f_{T}(x) \leq \phi(x) \leq T\log 2 + f_{T}(x),\qquad \forall x\in \R^n .
  \]
\end{exa}
\begin{exa}
  The previous explicit approximation carries over to a continuous piecewise affine
  function $\phi$ of a single real variable, as follows.
  By {\em piecewise affine}, we
  mean that $\R$ can be covered by finitely many intervals
  in such a way that $\phi$ is affine over each of these intervals.
  Then, $\phi$
  can be written in a unique way as
  \begin{align}
    \phi(x) = ax+b + \sum_{1\leq i\leq K} \alpha_i \max(0,x-\gamma_i)
    \label{e-dec-phi}
  \end{align}
  where $a,b$ are real parameters, $\gamma_1<\dots<\gamma_K$ are the nondifferentiability
  points of $\phi$, and $\alpha_i= \phi'(\gamma_i^+) -\phi'(\gamma_i^-)$
  is the jump of the derivative of $\phi$ at point $\gamma_i$.
  Another way to get insight of~\eqref{e-dec-phi}
    is to make the following  observation:
the function $\phi$
  has a second derivative in the distribution sense, $\phi''= \sum_{1\leq i\leq K} \alpha_i \delta_{\gamma_i}$; then~\eqref{e-dec-phi} is gotten by integrating
  twice the latter expression of $\phi''$. Possibly after subtracting
  to $\phi$ an affine function, we will always assume that $a=b=0$.
  
Then, setting 
\begin{align*}
I^+&\doteq  \{1\leq i\leq k\mid \alpha_i>0\},\\
I^-&\doteq  \{1\leq i\leq k\mid \alpha_i<0\},
\end{align*}
and   \[
  \phi^\pm(x) =     \sum_{i\in I^\pm} |\alpha_i| \max(0, x-\gamma_i)
  \]
  we write
\begin{equation*}
   \phi(x) = \phi^+(x)-\phi^-(x) .
\end{equation*}
Let 
\begin{align*}
a_i &= \sum_{j\in I^\pm, j\leq i} |\alpha_j|,  \\
b_i &= \sum_{j\in I^\pm, j\leq i} |\alpha_j|\gamma_j,
\end{align*}
and note that
\[
\phi^\pm (x) =\max_{i\in I^\pm} (a_i x -b_i) .
\]
Then, setting $f_T\doteq  f_T^+-f_T^-$
where
\[
f_T^\pm(x) = T\log\left(\sum_{i\in I^\pm} \exp({(a_ix-b_i)}/{T})\right)
\]
and using~\eqref{eq:lse_lse_reg_trop_exp-approx},
we get
\[
f_T - T\log |I^-|
\leq f \leq f_T + T\log |I^+|  .
\]
  \end{exa}
  
  \subsection{Data approximation}
  Consider a collection $\calD$ of $m$ data-points,
  \beq
  \calD= \{(\bbx_i,y_i)\}_{i=1}^m
  \label{eq:Ddataset}
\eeq
  where $y_i = \phi(\bbx_i)$, $i=1,\ldots,m$, and $\phi$ is an {unknown} function.
  The following universal data approximation result holds.

\begin{cor}[Universal data approximation]\label{cor:convAppr}
Given a collection of  data $\calD$ as in \eqref{eq:Ddataset},
for any $\eps>0 $ there exists ${T}>0$
and a function $d_T\in \dlset$ with rational coefficients such that 
\[
|d_T(\bbx_i)-y_i|\leqslant \eps,\quad i=1,\dots,m.
\]
\end{cor}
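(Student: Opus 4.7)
The plan is to reduce the data approximation statement to the already-established uniform approximation Theorem~\ref{thm:dlse} by interposing an auxiliary continuous function that interpolates the data. Since Theorem~\ref{thm:dlse} guarantees uniform approximation of any continuous function on a compact convex set, the corollary will follow immediately once we produce a suitable continuous target.

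First, I would choose a compact convex set $\mathcal{K}\subset\R^n$ containing the finite sample set $\{\bbx_1,\dots,\bbx_m\}$; for instance, $\mathcal{K}$ could be the convex hull of the $\bbx_i$, or any closed ball large enough to contain them. Next, I would exhibit a continuous function $\tilde{\phi}:\mathcal{K}\to\R$ satisfying $\tilde{\phi}(\bbx_i)=y_i$ for $i=1,\dots,m$. Since the data are generated by a well-defined function $\phi$, they are consistent (i.e., $\bbx_i=\bbx_j$ implies $y_i=y_j$), so such a continuous interpolant exists; one can invoke the Tietze extension theorem applied to the continuous assignment on the finite closed set $\{\bbx_1,\dots,\bbx_m\}$, or, more concretely, build $\tilde{\phi}$ by piecewise affine interpolation on a triangulation of $\mathcal{K}$ whose vertex set contains the distinct $\bbx_i$'s.

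With $\tilde{\phi}$ in hand, I would directly apply Theorem~\ref{thm:dlse} to the pair $(\tilde{\phi},\mathcal{K})$: for the given $\eps>0$, there exists a positive integer $p$, $T=1/p$, and a function $d_T\in\dlset$ with rational parameters such that
\[
|d_T(\bbx)-\tilde{\phi}(\bbx)|\leqslant \eps\qquad\text{for all }\bbx\in\mathcal{K}.
\]
Specializing this uniform bound to $\bbx=\bbx_i\in\mathcal{K}$ and using $\tilde{\phi}(\bbx_i)=y_i$ gives $|d_T(\bbx_i)-y_i|\leqslant \eps$ for every $i=1,\dots,m$, which is the desired conclusion.

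The only conceptual step that is not immediate from Theorem~\ref{thm:dlse} is the existence of the continuous interpolant $\tilde{\phi}$, but this is a standard application of Tietze-type extension and poses no real obstacle. Everything else (rationality of parameters, the value of $T$ as the reciprocal of an integer) is inherited verbatim from Theorem~\ref{thm:dlse}, so the proof is essentially a one-line corollary after the interpolation step is set up.
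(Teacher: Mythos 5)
Your proposal is correct and matches the paper's own argument: the paper likewise takes $\mathcal{K}$ to be the convex hull of the $\bbx_i$, constructs the continuous piecewise affine interpolant on a triangulation with vertices among the data points, and then applies \Cref{thm:dlse}. The Tietze-extension alternative you mention is a harmless variant, but the concrete triangulation route you describe is exactly what the paper does.
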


\begin{proof}
Let $\mathcal K \doteq  \mbox{co}\{\bbx_i,\ldots, \bbx_m\}$ be the convex hull of the input data points.
Consider a triangulation of the input points $\bbx_i$: recall that
such a triangulation consists of a finite collection of simplices $(\Delta_r)_{r\in R}$, satisfying the following properties:
(i) the vertices of these simplices are taken among the points $\bbx_1,\dots,\bbx_m$; (ii) each point $\bbx_i$ is the vertex of at least one simplex;
(iii) the interiors of theses simplices have pairwise empty intersections,
and (iv) the union of these simplices is precisely $\mathcal K$. Then, there
is a unique continuous function, $f$, affine on each simplex $\Delta_r$,
and such that $f(\bbx_i) = y_i$ for $1\leq i\leq m$. 
Observe that $\mathcal K$ is convex and compact by construction.
Now, a direct application of  \Cref{thm:dlse} 
shows that for any $\eps>0 $ there exists ${T}>0$
and a function $d_T\in \dlset$ with rational coefficients such that 
\[
|d_T(\bbx_i)- f(\bbx_i) | = |d_T(\bbx_i)-y_i|\leqslant \eps,\quad i=1,\dots,m,
\]
which concludes the proof. 
\end{proof}

\section{Positive functions on the positive orthant}
\label{sec:posorth}

In this section we discuss approximation results for functions taking positive values on the open positive orthant.
A particular case of this class of functions is given by log-log-convex functions, whose uniform approximation by means of
$\gpos$ functions was discussed in Corollary~1 of \cite{CaGaPo:18}.
We  shall first extend this result to functions with rational parameters, and then provide a universal approximation result for continuous  positive functions over the open positive orthant.
\subsection{Uniform approximation results}
The following preliminary definitions are instrumental for our purposes: a  
 subset ${\mathcal R}\subset \Rpp^n$ will be said to be {\em log-convex}
if its image by the map that takes the logarithm entry-wise is convex.
We shall say that a function $\psi_T\in\gpos$ has { rational parameters}
if it can be written as in~\eqref{eq:genposy} with $\psi$ given by~\eqref{eq:POS},
in such a way that $T$, the entries of the vectors $\bba^{(1)}$,
\dots, $\bba^{(K)}$, and the scalars $\log c_1,\dots,\log c_K$ are rational
numbers.
The following corollary extends Corollary~1 of~\cite{CaGaPo:18}.

\begin{cor}
[Universal approximators of log-log-convex functions]\label{cor:prelimloglogcvx}
Let $\ell$ be a log-log-convex function defined on a compact,
log-convex subset
${\mathcal R}$ of $ \Rpp^n$. 
Then, for all $\tilde{\eps} > 0$ there exist a function $\psi_T \in \gpos$ with rational parameters, for some $T=1/p$ where $p$ is a positive integer,
such that, for all $\bbx\in\mathcal{R}$,
\begin{equation}\label{eq:relBoun}
\left\vert \frac{\ell(\bbx)-\psi_T(\bbx)}{\min(\ell(\bbx),\psi_T(\bbx))}\right\vert\leqslant \tilde{\eps}.
\end{equation}
\end{cor}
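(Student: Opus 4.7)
The plan is to reduce to \Cref{cor:ratLSE} via the logarithmic change of variable that realises the one-to-one correspondence between $\lset$ and $\gpos$ recalled at the end of Section~II.B. First, I would set $\mathcal{K} \doteq \{\log(\bbx) : \bbx \in \mathcal{R}\}$, with the logarithm applied entry-wise. Since $\mathcal{R}$ is compact and log-convex, $\mathcal{K}$ is a compact convex subset of $\R^n$. I would then define $\phi(\bby) \doteq \log \ell(\exp(\bby))$ for $\bby \in \mathcal{K}$; by the log-log-convexity of $\ell$, this is a continuous convex function on $\mathcal{K}$.

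Next, I would apply \Cref{cor:ratLSE} to $\phi$: given $\varepsilon > 0$ (to be fixed later as a function of $\tilde{\eps}$), there exist $T = 1/p$ for some $p \in \Zpp$ and a function $f_T = f_T^{(\obba,\bbb)} \in \lset$ with rational parameters such that $|f_T(\bby) - \phi(\bby)| \leqslant \varepsilon$ for all $\bby \in \mathcal{K}$. Define $\psi_T(\bbx) \doteq \exp(f_T(\log \bbx))$; unfolding~\eqref{eq:lse_lse_reg_trop_exp} and using $\exp(\log \bbx) = \bbx$ entry-wise yields
\begin{equation*}
\psi_T(\bbx) = \left(\sum_{k=1}^K c_k^{1/T} \bbx^{\bba^{(k)}/T}\right)^T = \big(\psi(\bbx^{1/T})\big)^T,
\end{equation*}
where $\psi(\bbz) = \sum_{k=1}^K c_k \bbz^{\bba^{(k)}}$ with $c_k \doteq \exp(\beta_k)$. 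Thus $\psi_T \in \gpos$, and since $T$, the entries of the $\bba^{(k)}$, and $\log c_k = \beta_k$ are all rational, $\psi_T$ has rational parameters in the sense of the definition given above.

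Finally, I would convert the absolute bound into the relative bound~\eqref{eq:relBoun}. Since $\log \bbx \in \mathcal{K}$ for all $\bbx \in \mathcal{R}$, we have $|\log \ell(\bbx) - \log \psi_T(\bbx)| \leqslant \varepsilon$, i.e.\ $e^{-\varepsilon} \leqslant \ell(\bbx)/\psi_T(\bbx) \leqslant e^{\varepsilon}$. Assuming without loss of generality that $\ell(\bbx) \geqslant \psi_T(\bbx)$ (the converse case being symmetric), this gives
\begin{equation*}
\ell(\bbx) - \psi_T(\bbx) \leqslant (e^{\varepsilon} - 1)\,\psi_T(\bbx) = (e^{\varepsilon} - 1)\min(\ell(\bbx),\psi_T(\bbx)).
\end{equation*}
Hence for given $\tilde{\eps} > 0$ it suffices to pick any rational $\varepsilon > 0$ with $e^{\varepsilon} - 1 \leqslant \tilde{\eps}$, and the resulting $\psi_T$ satisfies~\eqref{eq:relBoun}. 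No step is delicate; the only items requiring care are checking that $\mathcal{K}$ inherits compactness and convexity from $\mathcal{R}$, verifying that rationality is preserved under the $\lset$--$\gpos$ correspondence, and the elementary conversion from an absolute log-error bound into the relative multiplicative bound.
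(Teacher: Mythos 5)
Your proof is correct and follows essentially the same route as the paper's: log-log transform to reduce to \Cref{cor:ratLSE}, then map back through the $\lset$--$\gpos$ correspondence and turn the additive log-error into the relative bound via $e^{-\eps}\leqslant \ell/\psi_T\leqslant e^{\eps}$ (the paper outsources these last steps to Corollary~1 of \cite{CaGaPo:18}, which you have filled in explicitly and correctly). One trivial bookkeeping slip: for your displayed identity $\psi_T(\bbx)=\bigl(\psi(\bbx^{1/T})\bigr)^T$ to hold, the posynomial coefficients must be $c_k=\exp(\beta_k/T)$ rather than $\exp(\beta_k)$; since $\log c_k=\beta_k/T=p\,\beta_k$ is still rational, this does not affect the conclusion.
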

\begin{proof}
By using the log-log transformation, define $\tilde{\ell}(\bbq)\doteq\log(\ell(\exp(\bbq)))$. Since $\ell(\bbx)$ is
log-log-convex in $\bbx$, $\tilde{\ell}(\bbq)$ is convex in $\bbq=\log \bbx$. 
Furthermore, the set $\mathcal{K}\doteq \log(\mathcal{R})$
is convex and compact since the set $\mathcal{R}$ is log-convex and compact.
Thus, by \Cref{cor:ratLSE}, for all ${\eps}>0$, there exist  $T=1/p$ where $p$ is a positive integer, and a function $f_T \in \lse_T$ with rational coefficients such that $|f_T(\bbq)-\tilde{\ell}(\bbq)|\leqslant {\eps}$ for all $\bbq\in\mathcal{K}$. 
From this point on, the proof follows the very same lines as the proof of Corollary~1 of~\cite{CaGaPo:18}.
\end{proof}
We next state an approximation result for functions on the positive orthant. The derivation of \Cref{thm:mainresult3} from \Cref{cor:prelimloglogcvx} is similar to the derivation of 
\Cref{thm:dlse} from \Cref{cor:ratLSE} and thus we omit its proof.
\begin{thm}
[Universal approximators of functions on the open orthant]\label{thm:mainresult3}
Let $\ell$ be a continous positive function defined on a compact
log-convex subset
${\mathcal R}\subset \Rpp^n$.
Then, for all $\tilde{\eps} > 0$ there exist two functions $\psi_T,\psi'_T \in \gpos$ with rational parameters, for some $T=1/p$ where $p$ is a positive integer,
such that, for all $\bbx\in\mathcal{R}$,
\begin{equation} 
\left\vert \frac{\ell(\bbx)-\psi_T(\bbx)/\psi'_T(\bbx)}{\min(\ell(\bbx),\psi_T(\bbx)/\psi'_T(\bbx))}\right\vert\leqslant \tilde{\eps}.
\end{equation}
\end{thm}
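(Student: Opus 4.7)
The plan is to mirror the derivation of \Cref{thm:dlse} from \Cref{cor:ratLSE}, but one level up: use \Cref{thm:dlse} itself (the DLSE approximation) after transforming the problem via the coordinate-wise logarithm.

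First, I would introduce the log-log transform: set $\tilde{\ell}(\bbq) \doteq \log(\ell(\exp(\bbq)))$ for $\bbq \in \mathcal{K} \doteq \log(\mathcal{R})$. Since $\ell$ is continuous and strictly positive on $\mathcal{R}$, and the entry-wise $\log$ is a homeomorphism $\Rpp^n \to \R^n$, $\tilde{\ell}$ is a continuous real-valued function. Because $\mathcal{R}$ is log-convex and compact, the set $\mathcal{K}$ is convex and compact. Thus $\tilde{\ell}$ satisfies the hypotheses of \Cref{thm:dlse}.

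Next, fix an auxiliary tolerance $\varepsilon>0$ (to be tuned at the end in terms of $\tilde{\eps}$). Applying \Cref{thm:dlse} to $\tilde{\ell}$ yields a rational $T = 1/p$ and two functions $g_T, h_T \in \lset$ with rational parameters such that
\[
|g_T(\bbq) - h_T(\bbq) - \tilde{\ell}(\bbq)| \leq \varepsilon \quad \text{for all } \bbq \in \mathcal{K}.
\]
Now I exploit the one-to-one correspondence between $\lset$ and $\gpos$ recalled after~\eqref{eq:genposy}: define $\psi_T(\bbx) \doteq \exp(g_T(\log \bbx))$ and $\psi'_T(\bbx) \doteq \exp(h_T(\log \bbx))$. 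Both are in $\gpos$, and the rationality of the parameters of $g_T,h_T$ transfers directly to $\psi_T,\psi'_T$ since the exponent vectors $\bba^{(k)}$ are preserved and the coefficients are $c_k = \exp(\beta_k)$, whose logarithms are rational.

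Finally, evaluating at $\bbq = \log \bbx$, the uniform bound above becomes
\[
\left| \log \ell(\bbx) - \log\bigl(\psi_T(\bbx)/\psi'_T(\bbx)\bigr) \right| \leq \varepsilon
\quad\text{for all } \bbx\in\mathcal{R}.
\]
Writing $u \doteq \ell(\bbx)$ and $v \doteq \psi_T(\bbx)/\psi'_T(\bbx)$, this gives $e^{-\varepsilon} \leq u/v \leq e^{\varepsilon}$, so $|u-v|/\min(u,v) \leq e^{\varepsilon} - 1$. Choosing $\varepsilon \doteq \log(1+\tilde{\eps})$ (which is still positive and can be used in the invocation of \Cref{thm:dlse} above) delivers the desired bound~\eqref{eq:relBoun}-type inequality with right-hand side $\tilde{\eps}$.

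The only mildly subtle step is the final conversion from additive log-scale error to the specific ``relative error divided by the minimum'' quantity appearing in the statement; but this is exactly the same arithmetic manipulation used in the proof of Corollary~1 of~\cite{CaGaPo:18}, which is why the authors say the argument is analogous and omit it. No genuine obstacle arises beyond ensuring that the rational parameters of $g_T$ and $h_T$ survive the exponential map, which they do by construction.
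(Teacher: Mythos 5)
Your proof is correct, and it is essentially the argument the paper has in mind (the paper omits it, describing it only as the analogue, starting from \Cref{cor:prelimloglogcvx}, of the derivation of \Cref{thm:dlse} from \Cref{cor:ratLSE}). The only organizational difference is that you enter through \Cref{thm:dlse} applied to $\log(\ell(\exp(\cdot)))$ and exponentiate at the very end, converting a single additive log-scale bound into the relative bound via $|u-v|/\min(u,v)\leq e^{\varepsilon}-1$, whereas the paper's sketched route would apply \Cref{cor:prelimloglogcvx} separately to the two log-log-convex factors of $\ell$ and then combine two relative-error bounds for the ratio; your version is, if anything, slightly cleaner, and the transfer of rational parameters under the exponential map (with $\log c_k=\beta_k/T$ rational) is handled correctly.
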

\subsection{Universal approximation by subtraction-free expressions}
We next derive from \Cref{thm:mainresult3} an approximation result
by {\em subtraction-free expressions}. The latter
are an important subclass of rational expressions, studied in~\cite{fomin}.
Subtraction-free expressions are well formed expressions in several
commutative variables $x_1,\dots,x_n$, defined
using the operations $+,\times, /$
  and using positive constants, but not using subtraction. Formally,
  a subtraction-free expression in the variables $x_1,\dots,x_n$ is a term
  produced by the context-free grammar rule
  \[ E\to E+E, E\times E, E/E, C, x_1,\dots,x_n
  \]
  where $C$ can take
  the value of any positive constant.
  For instance, $E_1\doteq (x_1+x_2^3)/(2x_1+3x_2/(x_1+x_2))$
  is a subtraction-free expression, whereas $E_2\doteq x_1^2-x_1x_2+x_2^2$
  is not a subtraction-free expression, owing to the
  presence of the $-$ sign. Note that $E_2$, thought of as a formal rational fraction, coincides with $E_3\doteq  (x_1^3+x_2^3)/(x_1+x_2)$ which is subtraction
  free, i.e., an expression which is not subtraction-free
  may well have an equivalent subtraction-free expression.
  However, there are rational fractions, and even polynomials,
  like $E_4\doteq (x_1-x_2)^2$,
  without subtraction equivalent free expressions, because
  any subtraction-free expression must take
  positive values on the interior of the positive cone,
  whereas $E_4$ vanishes on the line $x_1=x_2$.
  Important examples of subtraction-free expressions arise from
  series-parallel composition rules for resistances. More advanced
  examples, coming from algebraic combinatorics, are discussed in~\cite{fomin}.

  \begin{cor}[Approximation by subtraction-free expressions]
Let $\ell$ be a continous positive function defined on a compact
log-convex subset
${\mathcal R}\subset \Rpp^n$.
Then, for all $\tilde{\eps} > 0$ there exist positive integers $p,q$ and
a subtraction-free expression $E$ in $n$ variables $y_1,\dots,y_n$ such that the function
\[
f(\bbx) = E(x_1^{1/q},\dots,x_n^{1/q})^{1/p}
\]
in which $x_i^{1/q}$ is substituted to the variable $y_i$, satisfies,
for all $\bbx\in\mathcal{R}$,
\begin{equation}\label{eq:relBounnew}
\left\vert \frac{\ell(\bbx)-f(\bbx)}{\min(\ell(\bbx),f(\bbx))}\right\vert\leqslant \tilde{\eps}.
\end{equation}
  \end{cor}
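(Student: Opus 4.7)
My plan is to invoke \Cref{thm:mainresult3} to obtain a rational-parameter approximation of the form $\psi_T/\psi_T'$, and then to rewrite this ratio as the composition of a subtraction-free expression with the substitution $y_i = x_i^{1/q}$ followed by a $p$-th root extraction, for suitable positive integers $p,q$. First, applying \Cref{thm:mainresult3} with the given $\tilde{\eps}$ yields $T = 1/p$ for some positive integer $p$, together with $\psi_T,\psi_T' \in \gpos$ with rational parameters, such that~\eqref{eq:relBounnew} holds with $f = \psi_T/\psi_T'$. Unfolding~\eqref{eq:genposy}, we have $\psi_T(\bbx) = \psi(\bbx^p)^{1/p}$ with $\psi(\bbz) = \sum_{k=1}^K c_k \bbz^{\bba^{(k)}}$, where $c_k > 0$ and the exponent vectors $\bba^{(k)} \in \R^n$ have rational entries; analogously $\psi_T'(\bbx) = \psi'(\bbx^p)^{1/p}$, where $\psi'$ is a posynomial with positive coefficients and rational exponent vectors.

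Next, choose a positive integer $q$ that is a common multiple of all denominators of the entries of the rational exponent vectors of $\psi$ and $\psi'$, so that after multiplication by $pq$ these entries become integers. Under the substitution $y_i \doteq x_i^{1/q}$ (equivalently $x_i = y_i^q$), $\psi(\bbx^p) = \sum_{k} c_k \prod_i y_i^{pq \alpha_i^{(k)}}$ becomes a Laurent polynomial in $y_1,\dots,y_n$ with positive coefficients and integer exponents. Each monomial $c \prod_i y_i^{n_i}$ with $n_i \in \Z$ and $c > 0$ can be written as $\bigl(c \prod_{n_i \geq 0} y_i^{n_i}\bigr) \big/ \prod_{n_i < 0} y_i^{-n_i}$, using only positive constants, multiplication, and division, and is therefore a subtraction-free expression; summing such terms yields a subtraction-free expression $\widetilde{\psi}(y)$, and likewise we obtain $\widetilde{\psi'}(y)$ for $\psi'(\bbx^p)$. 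Defining $E(y) \doteq \widetilde{\psi}(y)/\widetilde{\psi'}(y)$ gives a subtraction-free expression in $n$ variables such that $\psi_T(\bbx)/\psi_T'(\bbx) = E(x_1^{1/q}, \dots, x_n^{1/q})^{1/p}$, and the relative bound~\eqref{eq:relBounnew} is inherited directly from \Cref{thm:mainresult3}.

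The argument is essentially a bookkeeping reduction, and there is no serious obstacle. The structural point is that positive constants and division are already part of the subtraction-free grammar, so negative integer exponents and arbitrary positive (possibly irrational) coefficients $c_k$ cause no difficulty. The only item requiring attention is the choice of $q$: it rationalizes the exponents so that the resulting monomials admit subtraction-free representations via repeated multiplication and reciprocation, turning the Laurent-polynomial ratio into the required syntactic form.
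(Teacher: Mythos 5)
Your proof is correct and follows essentially the same route as the paper's: apply \Cref{thm:mainresult3} to get $f=\psi_T/\psi'_T$ with $T=1/p$ and rational parameters, choose $q$ to clear the denominators of the exponents, and read off the subtraction-free expression $E$ with $f(\bbx)=E(x_1^{1/q},\dots,x_n^{1/q})^{1/p}$. Your additional care in writing Laurent monomials with negative integer exponents as quotients within the grammar, and in noting that the (possibly irrational) positive coefficients $c_k$ are admissible constants, only makes explicit what the paper's terser proof leaves implicit.
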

  \begin{proof}
    \Cref{thm:mainresult3} shows that~\eqref{eq:relBounnew}
    holds with $f=\psi_T/\psi_T'$ where $T=1/p$ for some 
    positive integer $p$, and $\psi_T,\psi'_T$ are functions
    in $\gpos$ with rational parameters,
    i.e.,
    \begin{align*}
    \psi_T(\bbx) &= \left(\sum_{k=1}^K  c_k \bbx^{\bba^{(k)}p}\right)^{1/p},
    \\
    \psi'_T(\bbx) &= \left(\sum_{k=1}^{K'}  c'_k \bbx^{(\bba')^{(k)}p}\right)^{1/p},
    \end{align*}
    where the vectors $\bba^{(k)}$ and $(\bba')^{(k)}$ have rational
    entries. Denoting by $q$ the least common multiple of the denominators
    of the entries of the vectors $\bba^{(k)}p$ and $(\bba')^{(k)}p$,
    we see that $\psi_T(\bbx)/\psi_T(\bbx)$ is precisely of the form $E(x_1^{1/q},\dots,x_n^{1/q})^{1/p}$ where $E$ is a subtraction-free rational expression.
  \end{proof}


\subsection{Approximation of  positive data}
\label{ssec:approxpos}
Consider a collection $\mLD$ of $m$ data pairs,
\begin{equation*}
\mLD = \{(\bbz_i,w_i)\}_{i=1}^m,
\end{equation*}
where $\bbz_i\in\Rpp^n$, $w_i\in\Rpp$, $i=1,\dots,m$, with
$w_i = \ell(\bbz_i)$, $i=1,\dots,m$,
where $\ell:\Rpp^n\rightarrow \Rpp$ is an unknown function. 
The data in $\mLD$ is referred to as \emph{positive data}.
{The following proposition is now an immediate consequence of
\Cref{thm:mainresult3}, where $\mathcal R$ can be taken as the log-convex hull of the
input data points\footnote{For given $\bbz_1,\ldots,\bbz_m\in\Rpp^n$, we define their log-convex hull as the set of vectors $\bbz= \prod_{i=1}^m\bbz_i^{\xi_i}$, where
$\xi_i\in[0,1]$, $i=1,\dots,m$, and $\sum_{i=1}^m\xi_i = 1$.}.

\begin{prop}\label{cor:posappr}
Given positive data $\mLD\doteq\{(\bbz_i,w_i)\}_{i=1}^m$, for any $\tilde{\eps}>0$ there exist 
 a rational $T>0$ and two functions $\psi_T,\psi'_T \in \gpos$ with rational parameters
such that
\begin{equation}\label{eq:relposdatafit}
\left\vert \frac{w_i-\psi_T(\bbz_i)/\psi'_T(\bbz_i)}{\min(w_i,\psi_T(\bbz_i)/\psi'_T\bbz_i))}\right\vert\leqslant \tilde{\eps},\quad i=1,\ldots,m.
\end{equation}
\end{prop}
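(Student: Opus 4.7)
The plan is to reduce the claim directly to \Cref{thm:mainresult3} by (i) identifying a compact log-convex set $\mathcal{R}$ containing the data inputs, and (ii) exhibiting a continuous positive function $\ell$ on $\mathcal{R}$ that interpolates the given data values. Once these are in place, a uniform relative-error approximation of $\ell$ on $\mathcal{R}$ immediately yields the pointwise bound~\eqref{eq:relposdatafit} at each data pair.

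First, I would take $\mathcal{R}$ to be the log-convex hull of $\bbz_1,\ldots,\bbz_m$, as suggested in the footnote. This set is precisely the image, under coordinatewise exponentiation, of the convex hull of the points $\log \bbz_1,\ldots,\log \bbz_m\in \R^n$. Since the convex hull of finitely many points in $\R^n$ is compact and convex, $\mathcal{R}$ is compact and log-convex. In particular, every $\bbz_i$ belongs to $\mathcal{R}$.

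Next I would construct a continuous positive function $\ell:\mathcal{R}\to \Rpp$ with $\ell(\bbz_i)=w_i$ for $i=1,\dots,m$. To do this I would pass through logarithms: set $\bbq_i\doteq \log \bbz_i$ and $v_i\doteq \log w_i$, and triangulate the points $\bbq_i$ inside the convex compact set $\log \mathcal{R}$, exactly as in the proof of \Cref{cor:convAppr}. The unique continuous function $\tilde\ell$ that is affine on each simplex of the triangulation and satisfies $\tilde\ell(\bbq_i)=v_i$ provides a continuous interpolant of the log-data on $\log \mathcal R$, and then $\ell(\bbx)\doteq \exp(\tilde\ell(\log\bbx))$ is a continuous positive interpolant of the original data on $\mathcal R$, satisfying $\ell(\bbz_i)=w_i$.

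Finally, I would apply \Cref{thm:mainresult3} to $\ell$ on the compact log-convex set $\mathcal{R}$: for the prescribed $\tilde{\eps}>0$, this yields a rational $T=1/p$ and two functions $\psi_T,\psi'_T\in\gpos$ with rational parameters such that the relative error
\[
\left\vert \frac{\ell(\bbx)-\psi_T(\bbx)/\psi'_T(\bbx)}{\min(\ell(\bbx),\psi_T(\bbx)/\psi'_T(\bbx))}\right\vert \leqslant \tilde{\eps}
\]
holds uniformly over $\bbx\in\mathcal{R}$. Specializing to $\bbx=\bbz_i$ and using $\ell(\bbz_i)=w_i$ gives exactly~\eqref{eq:relposdatafit}, concluding the proof. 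The only mildly delicate point is the triangulation/interpolation step, but it is routine once the problem has been transported to log-coordinates; everything else is an immediate invocation of the previously established universal approximation result.
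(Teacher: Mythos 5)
Your proposal is correct and follows exactly the route the paper intends: the paper presents this proposition as an immediate consequence of \Cref{thm:mainresult3} with $\mathcal{R}$ taken to be the log-convex hull of the input points, and you have simply made explicit the (routine) construction of a continuous positive interpolant via triangulation in log-coordinates, mirroring the proof of \Cref{cor:convAppr}. No gaps.
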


\section{$\dlset$  networks \label{sec:networks}}
Functions in $\dlset$ can
be modeled through a feedforward neural network (\emph{$\ffnn$}) architecture, composed of two $\lset$ networks in parallel, whose outputs are fused via an output difference node, see
Fig.~\ref{fig:dlsetnet}. 

\begin{figure}[htb]
\centering
\resizebox{0.4\textwidth}{!}{
\includegraphics{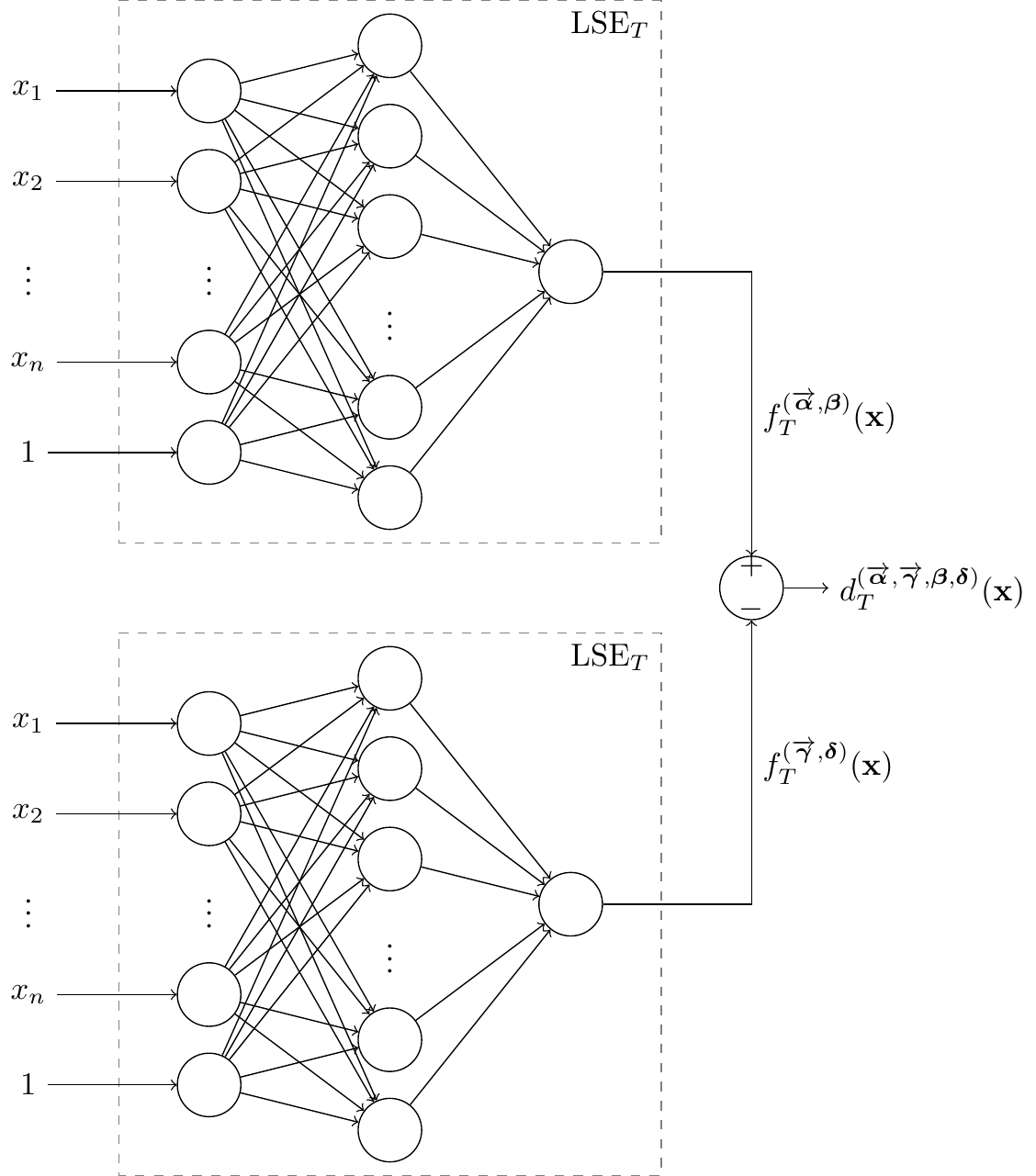}}
\caption{A $\dlset$ network is composed of two $\lset$ networks in parallel, with a difference output node. \label{fig:dlsetnet}}
\end{figure}

It may sometimes be convenient to highlight the full parameterization 
of the input-output function $d_T(\cdot)$ synthesized by the $\dlset$ network, in which case we shall write 
\[d_T^{(\obba,\obbg,\bbb,\bbd)}(\bbx)=f_T^{(\obba,\bbb)}(\bbx)-f_T^{(\obbg,\bbd)}(\bbx)\]
where $\obba = (\bba^{(1)},\ldots,\bba^{(K)})$, $\obbg = (\bbg^{(1)},\ldots,\bbg^{(K)})$, 
$\bbb =  (\beta_1,\ldots,\beta_K)$, and $\bbd=(\delta_1,\dots,\delta_k)$ are
the parameter vectors of the two $\lset$ components.

Each  $\lset$ component  has  $n$ input nodes, one hidden layer with $K$ nodes, and one output node.
The activation  function of the hidden nodes is
$ s\mapsto (\exp(s/T))$,
 and  the activation of the output node of each $\lset$ component is
 $s\mapsto T\log(s)$.
Each node in the hidden layer  of the first $\lset$ component network computes a term of the form  
$s_k =  \inner{\bba^{(k)}}{\bbx} +  \beta_k$,
where the $i$-th entry $\alpha^{(k)}_i$ of $\bba^{(k)}$ represents the weight between node $k$ and input $x_i$, and $\beta_k$ is the bias term of node $k$. Each node $k$ of the first $\lset$ network thus generates activations
\[a_k = \exp (  \inner{\bba^{(k)}}{\bbx/T} +  \beta_k/T ).\]  
We consider the weights from the inner nodes to the output node to be unitary, whence the output node of the first $\lset$ network computes
$s = \sum_{k=1}^K a_k$ and then, according to the output activation function, the output layer returns
the value  
\[
 T \log (s) = T \log \left(\sum_{k=1}^K a_k\right).
\]
An identical reasoning applies to the output of the second component  $\lset$ network. The overall output 
realizes a $\dlset$ function which, by \Cref{thm:dlse}, allows us to approximate any 
continuous function over a compact convex domain.  Similarly, by \Cref{cor:convAppr}, we
can approximate data $\calD= \{(\bbx_i,y_i)\}_{i=1}^m$ via a  $\dlset$ network, to any given precision.

\begin{thm}\label{cor:ApprFFNN}
Given a collection of data $\calD\doteq\{(\bbx_i,y_i)\}_{i=1}^m$,
for each $\eps>0$ there exists a $\dlset$ neural network 
such that \[|d_T(\bbx_i)-y_i|\leqslant \eps,\quad i=1,\dots,m,\]
where $d_T$ is the input-output function of the network.
\end{thm}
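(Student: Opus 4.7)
The plan is to reduce this theorem directly to \Cref{cor:convAppr} by noting that the $\dlset$ network architecture described just before the statement is, by construction, a faithful realization of an arbitrary element of the $\dlset$ class. So the theorem is essentially a translation of the analytic universal data approximation result into the language of the network architecture.

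First, I would invoke \Cref{cor:convAppr} on the given dataset $\calD=\{(\bbx_i,y_i)\}_{i=1}^m$: for any $\eps>0$ this produces a rational $T>0$ and a function $d_T\in \dlset$ such that $|d_T(\bbx_i)-y_i|\leq \eps$ for $i=1,\dots,m$. By the definition of $\dlset$, there exist $g_T,h_T\in \lset$ with $d_T=g_T-h_T$, and these admit parameterizations as in~\eqref{eq:lse_lse_reg_trop_exp}, say with parameter vectors $(\obba,\bbb)$ and $(\obbg,\bbd)$ respectively, writing $d_T = d_T^{(\obba,\obbg,\bbb,\bbd)}$.

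Second, I would check that this function is exactly the input-output map of the network depicted in Figure~\ref{fig:dlsetnet}. This is unfolded in the text of Section~\ref{sec:networks}: each node $k$ of the first $\lset$ component applies the affine preactivation $\inner{\bba^{(k)}}{\bbx}+\beta_k$, then the exponential activation with temperature $T$, yielding $a_k=\exp(\inner{\bba^{(k)}}{\bbx/T}+\beta_k/T)$. Summing these with unit weights into the output neuron and applying the activation $s\mapsto T\log s$ returns exactly $f_T^{(\obba,\bbb)}(\bbx)$. The same computation for the second parallel $\lset$ component yields $f_T^{(\obbg,\bbd)}(\bbx)$, and the final difference node returns $f_T^{(\obba,\bbb)}(\bbx)-f_T^{(\obbg,\bbd)}(\bbx)=d_T^{(\obba,\obbg,\bbb,\bbd)}(\bbx)=d_T(\bbx)$. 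Hence this specific choice of weights $(\obba,\obbg,\bbb,\bbd)$ and inner width $K$ defines a $\dlset$ network whose input-output function is precisely $d_T$.

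The two steps together give the pointwise bound $|d_T(\bbx_i)-y_i|\leq \eps$ for all $i$, as required. There is essentially no obstacle beyond bookkeeping: the only subtlety worth flagging is that the two $\lset$ components in \Cref{cor:convAppr} may a priori have different numbers of terms $K_1$ and $K_2$, whereas the figure suggests a common width $K$; this is resolved by taking $K=\max(K_1,K_2)$ and padding the smaller component with duplicated hidden units whose contributions are offset by a scalar shift absorbed into the biases (or, more simply, by allowing the two parallel subnetworks to have different widths, which does not affect the realizability of $d_T$). Once this cosmetic point is noted, the statement is immediate.
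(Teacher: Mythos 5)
Your proposal is correct and matches the paper's intended argument: the paper gives no separate proof for this theorem, but the surrounding text of Section~\ref{sec:networks} makes clear it is meant to follow exactly as you argue, by combining \Cref{cor:convAppr} with the observation that the described two-component architecture realizes any function in $\dlset$. Your remark about reconciling possibly different widths $K_1$ and $K_2$ of the two components is a harmless bookkeeping point the paper leaves implicit.
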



\subsection{Training $\dlset$ networks}
By using the scaling property \eqref{eq:scaling} of $\lset$ functions, it can be noticed that a simpler $\lse$ network structure can 
be used to implement a $\dlset$ neural network, as shown in Fig.~\ref{fig:dlsenet}. 

\begin{figure}[htb]
\centering
\resizebox{0.48\textwidth}{!}{
\includegraphics{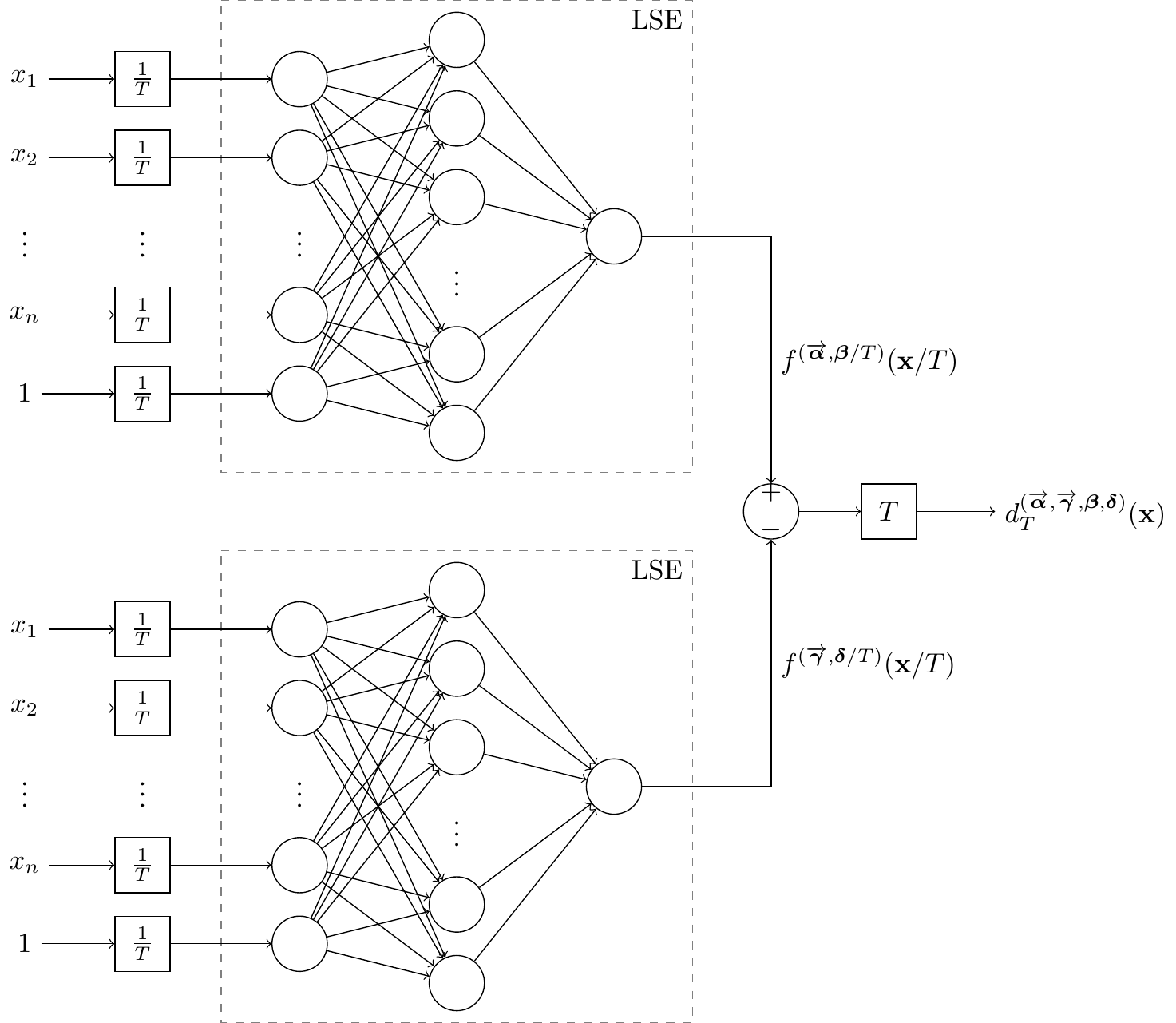}}
\caption{The same  $\dlset$ network 
as in Fig.~\ref{fig:dlsetnet} can be obtained by suitably pre-scaling the input and outputs of the
two component $\lse$ networks. \label{fig:dlsenet}}
\end{figure}

As a matter of fact, given the parameter vectors $\obba$, $\obbg$, $\bbb$,
$\bbd$, it can be easily derived that
\begin{multline*}
d_T^{(\obba,\obbg,\bbb,\bbd)}(\bbx)=f_T^{(\obba,\bbb)}(\bbx)-f_T^{(\obbg,\bbd)}(\bbx)\\
= T \left(f_1^{(\obba,\bbb/T)} (\bbx/T)-f_1^{(\obbg,\bbd/T)} (\bbx/T)\right)\\
=Td_1^{(\obba,\obbg,\bbb/T,\bbd/T)}(\bbx/T),
\end{multline*}
and hence dealing with a $\dlset$ neural network corresponds to deal with a $\dlse$ neural network
whose input and output have been rescaled.
%
%
Each of the two $\lse$ components of the network 
realizes an input-output map of the form
\begin{equation*}
f^{(\obba,\bbb/T)}( \bbx/T) = \log \left(\sum_{k=1}^K \exp ( \inner{\bba^{(k)}}{ \bbx/T} + \beta_k/T )\right).
\label{eq:lsep}
\end{equation*}
The gradients of this function with respect to its parameters $\bba^{(i)}$, $\beta_i$ are, for $i=1,\ldots,K$,
\begin{align*}
\nabla_{\bba^{(i)}} f^{(\obba,\bbb/T)}( \bbx/T) &= \frac {\exp(\inner{\bba^{(i)}}{ \bbx/T} + \beta_i/T )\,{\bbx}}{T\sum_{k=1}^K \exp ( \inner{\bba^{(k)}}{ \bbx/T} + \beta_k /T)}, \\
\nabla_{\beta_i} f^{(\obba,\bbb/T)}( \bbx/T) &=\frac {\exp(\inner{\bba^{(i)}}{ \bbx/T} + \beta_i/T )}{T\sum_{k=1}^K \exp ( \inner{\bba^{(k)}}{ \bbx/T} + \beta_k/T )}.
\end{align*}
Thus, by using the chain rule, we have that
\begin{align*}
\nabla_{\bba^{(i)}} d_T^{(\obba,\obbg,\bbb,\bbd)}(\bbx) &= \frac {\exp(\inner{\bba^{(i)}}{ \bbx/T} + \beta_i/T )\,{\bbx}}{\sum_{k=1}^K \exp ( \inner{\bba^{(k)}}{ \bbx/T} + \beta_k/T )}, \\
\nabla_{\beta_i} d_T^{(\obba,\obbg,\bbb,\bbd)}(\bbx) &=\frac {\exp(\inner{\bba^{(i)}}{ \bbx/T} + \beta_i /T)}{\sum_{k=1}^K \exp ( \inner{\bba^{(k)}}{ \bbx/T} + \beta_k/T )},\\
\nabla_{\bbg^{(i)}} d_T^{(\obba,\obbg,\bbb,\bbd)}(\bbx) &= -\frac {\exp(\inner{\bbg^{(i)}}{ \bbx/T} + \delta_i/T )\,{\bbx}}{\sum_{k=1}^K \exp ( \inner{\bbg^{(k)}}{ \bbx/T} + \delta_k /T)} ,\\
\nabla_{\delta_i} d_T^{(\obba,\obbg,\bbb,\bbd)}(\bbx) &=-\frac {\exp(\inner{\bbg^{(i)}}{ \bbx/T} + \delta_i/T )}{\sum_{k=1}^K \exp ( \inner{\bbg^{(k)}}{ \bbx/T} + \delta_k/T )}.
\end{align*}

Given a dataset $\calD$ as in \eqref{eq:Ddataset}, these gradients can be used to train a $\dlset$ network by using classical algorithms such as 
the Levenberg-Marquardt algorithm \cite{marquardt1963algorithm}, the Fletcher-Powell conjugate gradient \cite{davidon1991variable},
or  the stochastic gradient descent algorithm~\cite{bertsekas1999nonlinear}.
%
In numerical practice, one may fix the parameter $T$ and 
 $K$ and train the network with respect to the
parameters $\obba$, $\bbb$, $\obbg$ and $\bbd$ by using one of the methods mentioned above,
until a satisfactory cross-validated fit is found. A suitable initial value 
for the $T$ parameter may be set, for instance, to the inverse mid output range ${2}/{\vert\max(y_i)-\min(y_i)\vert}$.
Alternatively, $T$ can be considered as a trainable variable as well, and computed by the training algorithm alongside with the other parameters.

\color{black}The following example illustrates the application of the Levenberg-Marquardt algorithm to a simple case.
\begin{exa}\label{ex:nonconvexappr}
Consider the function $\phi:[-2,2]\rightarrow \R$,
\begin{equation}\label{eq:funphi}
\phi(x)=x^2+\sin(2\,\pi\,x).
\end{equation}
Such a function, which is clearly nonconvex, has been used to generate the dataset $\calD= \{(x_i,y_i)\}_{i=1}^{100}$,
where each $x_i$ has been taken uniformly at random in $[-2,2]$ and $y_i=\phi(x_i)$, $i=1,\dots,100$.
The Levenberg-Marquardt algorithm has been used to train a $\dlset$ network fitting such data with $K=10$
and $T={2}/\vert\max(y_i)-\min(y_i)\vert$.
\color{black}Fig.~\ref{fig:dlseEx1} depicts the output of the $\dlset$ network and of its two $\lset$ components
$f_T^{(\obba,\bbb)}( \bbx) $ and $f_T^{(\obbg,\bbd)}( \bbx) $.
\begin{figure}[H]
\centering
\resizebox{0.45\textwidth}{!}{
\includegraphics{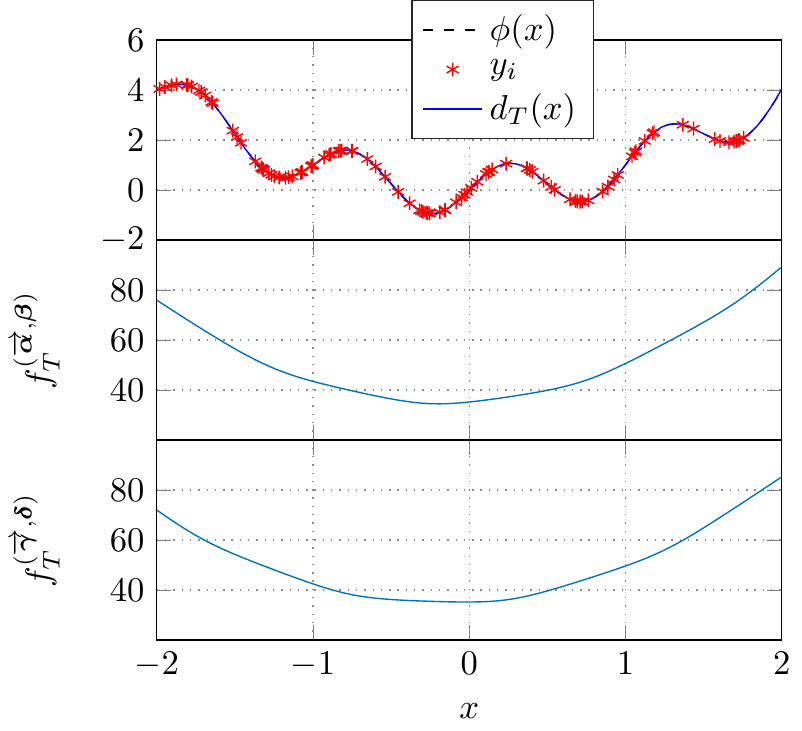}}
\caption{Output of the trained $\dlset$ network.\label{fig:dlseEx1}}
\end{figure}

\noindent 
As shown in Fig.~\ref{fig:dlseEx1}, although the function $\phi$ is nonconvex,
it is well approximated by a $\dlset$ network. 
Indeed, the data represented in Fig.~\ref{fig:dlseEx1} are approximated
by the trained $\dlset$ network with a mean square error of $4.4\cdot 10^{-5}$.
\end{exa}

\section{Non-convex optimization via $\dlset$ networks \label{sec:design}}

In view of the results established in Sections~\ref{sec_DLSE} and~\ref{sec:networks},
$\dlset$ networks can be efficiently used to compute a \emph{difference of convex} (\emph{DC})
approximate decomposition of any continuous function over a compact set. Indeed, by using
the tools described in Section~\ref{sec:networks}, given any continuous function $\phi(\cdot)$ 
defined on a convex compact set $\mK\subset \R^n$ (or, more generally,
a dataset generated through any function $\phi$) 
and $\eps\in\Rpp$, we can determine $g_T,h_T\in\lset$
such that
\begin{equation}\label{eq:appr}
\vert \phi(\bbx)-g_T(\bbx)+h_T(\bbx)\vert \leqslant \eps,\quad \forall\bbx\in\mK.
\end{equation}
Once functions $g_T(\bbx)$ and $h_T(\bbx)$ are determined via training on available data, 
we have a surrogate model 
$d_T = g_T - h_T \simeq \phi$
that we can use for solving approximately  design problems of the form
$\min_{\bbx\in\mK}\phi(\bbx)$,
by substituting the surrogate function $d_T$ in place of the (possibly unknown) function $\phi$.
The resulting surrogate design problem
\begin{equation}\label{eq:minProb}
\min_{\bbx\in\mK}\; d_T^{(\obba,\obbg,\bbb,\bbd)}(\bbx) 
\end{equation}
involves the minimization of the difference of two convex $\lset$ functions.
An approximate solution to this problem can be computed 
by means of a specific and effective algorithm named Difference-of-Convex Algorithm (DCA), which is described in \cite{tao2005dc,le2008dc,dinh2014recent,LeThi2018}.
We next  tailor the DCA  to our specific  context in the  following \Cref{alg:DLSEA}, denoted by DLSEA.

\begin{algorithm}
\caption{Difference of $\lset$ algorithm (DLSEA)\label{alg:DLSEA}}
\begin{algorithmic}[1]
\REQUIRE functions $g_T=f_T^{(\obba,\bbb)}$
	and $h_T=f_T^{(\obbg,\bbd)}$ in $\lset$ and a convex compact set $\mK$.
\ENSURE a candidate optimal solution $\hat{\bbx}^\star$ to the problem 
\begin{equation}\label{eq:minProb2}
\min_{\bbx\in\mK}\; (g_T(\bbx)-h_T(\bbx)).
\end{equation}
\STATE pick initial point $\bbc^{(0)}\in\mK$
\FOR{$\varkappa\in\Zp$}
	\STATE \label{step:vk}let $\bbv^{(\varkappa)}=\frac{\sum_{k=1}^K \exp(\inner{\bbg^{(k)}}{\bbc^{(\varkappa)}/T}+\delta_k/T)\,\bbg^{(k)} }{\sum_{k=1}^K \exp(\inner{\bbg^{(k)}}{\bbc^{(\varkappa)}/T}+\delta_k/T)}$
	\STATE \label{step:xk1}\label{eq:convProb} 
		let
%
$
		\bbc^{(\varkappa+1)}=\argmin_{\bbx\in\mK}\{g_T(\bbx)-\inner{\bbx}{\bbv^{(\varkappa)}}  \}
		$
	\IF{$\frac{\Vert \bbc^{(\varkappa+1)}-\bbc^{(\varkappa)}\Vert}{1+\Vert \bbc^{(\varkappa)}\Vert}$ is smaller than a tolerance}
		\RETURN $\hat{\bbx}^\star=\bbc^{(\varkappa+1)}$
	\ENDIF
\ENDFOR
\end{algorithmic}
\end{algorithm}

The DLSEA returns a candidate optimal solution to problem~\eqref{eq:minProb2}, see \cite{LeThi2018}.
In words, this algorithm starts from a given initial point $\bbc^{(0)}\in\mK$ and iterates until convergence 
Step~\ref{step:vk}, in which the gradient of  $h_T=f_T^{(\obbg,\bbd)}$ is computed as 
  $\bbv^{(\varkappa)}=\nabla h_T(\bbc^{(\varkappa)})$, and 
Step~\ref{step:xk1}, in which a local approximation of  problem \eqref{eq:minProb2} is solved.
We observe that the function $g_T(\bbx)-\inner{\bbx}{\bbv^{(\varkappa)}}$ minimized in Step~\ref{step:xk1} is (except for a constant term that does not affect the minimization) equal
to the difference between  the convex function $g_T(\bbx)$ and the linearization of $h_T$ around the current solution $\bbc^{(\varkappa)}$. Therefore, the problem to be solved in Step~\ref{step:xk1} is a convex minimization problem, which can be solved globally and efficiently via standard tools.

\begin{exa}\label{ex:DLSEA}
The DLSEA has been used to find the global minimum of the function $\phi$ given in~\eqref{eq:funphi}
on the compact convex set $\mK=[-2,2]$. Namely, by exploiting the approximate DC decomposition
of $\phi$ determined in Example~\ref{ex:nonconvexappr} and letting $\bbc^{(0)}=0$, we obtained, 
with $4$ iterations of the DLSEA, the approximate solution $\hat{\bbx}^\star=-0.2381$ to problem~\eqref{eq:minProb2},
that is close to the actual solution $\bbx^\star=-0.2379$.
\end{exa}

\section{Application: diet design for  type 2 diabetes \label{sec:appli}}
Type 2 diabetes mellitus is a chronic disease that affects the way the human body processes glucose.
It is characterized by reduced sensitivity of tissues to insulin, a hormone produced by pancreatic
beta cells that promotes the absorption of glucose from blood into liver, fat and skeletal muscle cells~\cite{ripsin2009management}. 

The main objective of this section is to show how $\dlset$ networks and the DLSEA 
can be used to design a diet based on 5 meals for a patient with type 2 diabetes,
with the aim of minimizing the maximal concentration of glucose in blood,
while guaranteeing that a sufficient amount of glucose is administrated (namely, exactly $185\,\mathrm{g}$).
In order to pursue this objective, the meal model for the glucose-insulin
system given in \cite{dalla2007meal} has been used to simulate the time-behavior of
plasma glucose concentration with 
breakfast at 8 a.m. (containing $x_1\,\mathrm{g}$ of glucose), 
mid-morning snack at  11 a.m. (containing $x_2\,\mathrm{g}$ of glucose), 
lunch at 1 p.m. (containing $x_3\,\mathrm{g}$ of glucose), 
mid-afternoon snack at 5 p.m. (containing $x_4\,\mathrm{g}$ of glucose),
and dinner at 8 p.m. (containing $x_5\,\mathrm{g}$ of glucose).
This model has been used to generate synthetic data: $10^3$ points $\bbx^{(i)}=[\begin{array}{cccc}
x_1^{(i)} & \cdots & x_5^{(i)}\end{array}]^\top$ have been picked uniformly at random from the set
\begin{equation*}
\mK = \left\{\bbx\in\R^5:x_j\geqslant 0,\,j=1,\dots,5,\,\sum_{j=1}^5x_j=185 \right\},
\end{equation*}
and the meal model has been used to determine the corresponding maximum of glucose concentration $y^{(i)}$.
Thus, the training dataset
$\calD_{\mathrm{train}}=\{(\bbx^{(i)},y^{(i)}) \}_{i=1}^{1000}$
has been used to train a $\dlset$ network with temperature parameter $T=12.98\,\cdot 10^{-3}$
(that is ${2}/{\vert\max(y_i)-\min(y_i)\vert}$)
\color{black}and 
with $K=30$ nodes in each of its two $\lset$ components. In order to evaluate the prediction
capabilities of this network, after the training, we generated, by using the same method as above, 
a validation dataset
$\calD_{\mathrm{valid}}=\{(\check{\bbx}^{(i)},\check{y}^{(i)}) \}_{i=1}^{100}$,
and we compared the outputs of the $\dlse_{T}$ model $d_T(\check{\bbx}^{(i)})$ 
with $\check{y}^{(i)} $, $i=1,\dots,100$.
For comparison purposes, a classical $\ffnn$ with symmetric sigmoid activation function for the hidden layer (with $60$ nodes) and linear activation function for the output layer has been trained on the same dataset $\calD_{\mathrm{train}}$ and its prediction performance has been evaluated
over $\calD_{\mathrm{valid}}$. Table~\ref{tab:predErr} summarizes the prediction errors of the two models.

\begin{table}[htb!]
\caption{Prediction errors over $\calD_{\mathrm{valid}}$\label{tab:predErr}}
\centering
{\renewcommand{\arraystretch}{1.2}
\renewcommand{\tabcolsep}{6pt}
\begin{tabular}{cccccc}
\hline
\multirow{2}{*}{Method}  & Mean Sq. & Mean Rel. & Max Abs.& Max Rel. & r$^2$\\
&$[\mathrm{mg/dL}]$ &  $[-]$ & $[\mathrm{mg/dL}]$ & $[-]$ &  $[-]$\\
\hline
$\dlse$  & $0.0624$& $0.0006$ & $1.1043$ & $0.0043$ & $0.9999$\\
$\ffnn$  & $2.3145$& $0.0041$& $7.9480$ & $0.0311$ & $0.9959$\\
\hline
\end{tabular}
}
\end{table}

As shown by Table~\ref{tab:predErr}, the $\dlset$ model has the best performance with respect to all
the error metrics. Furthermore, differently form classical $\ffnn$, it is readily amenable to efficient optimization via
the DLSEA. Indeed, we applied such an algorithm to the $\dlset$ model and we obtained 
the optimal diet that minimizes the maximal concentration of glucose in blood,
while guaranteeing that $185\,\mathrm{g}$ of glucose are administrated.
Fig.~\ref{fig:optimal} depicts the optimal diet and the corresponding time-behavior
of the plasma glucose concentration. The optimal diet  is such that the corresponding maximal
plasma glucose concentration in $24\,\mathrm{h}$ is $253.06\,\mathrm{mg/dL}$.
\begin{figure}[htb!]
\centering
\resizebox{0.5\textwidth}{!}{
\includegraphics{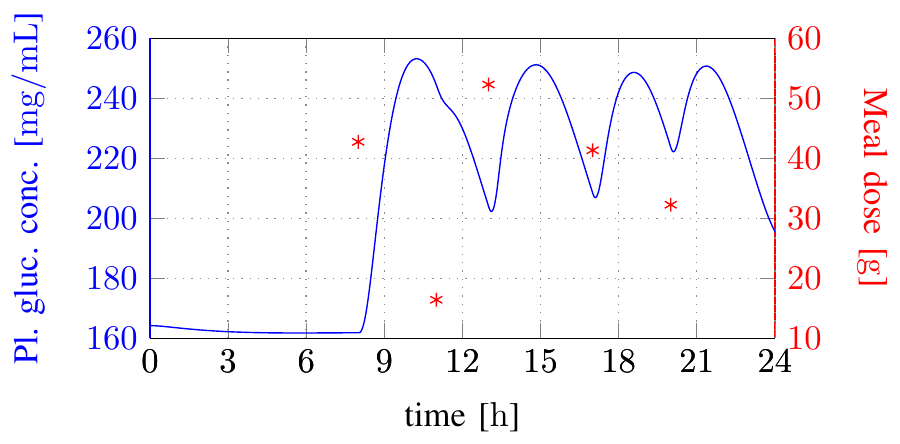}}
\caption{Simulation of the time-behavior of the plasma glucose concentration with the optimal diet determined
via the DLSEA.\label{fig:optimal}}
\end{figure}
%

\section{Conclusions \label{sec:concl}}
In this paper, we showed that a neural network whose output is the difference of the outputs of two
feedforward neural networks with exponential activation function in the hidden layer and logarithmic
activation function in the output node is an universal approximator of continuous functions over
compact convex sets. 
By using a logarithmic transformation, such networks maps to a class of subtraction
free ratios of generalized posynomials, which we showed to be universal approximators of positive
functions over the positive orthant.

The main advantage of $\dlset$ networks with respect to classical $\ffnn$ is that they are readily
amenable to effective optimization-based design. In particular, by adapting the DCA given in \cite{LeThi2018}
to our context, we derived an ad-hoc algorithm for optimizing $\dlset$  models which has proved to be efficient in 
the considered test cases.

\bibliographystyle{ieeetr}
\bibliography{biblio}

\begin{IEEEbiography}
    [{\includegraphics[width=1in,height=3truecm,clip,keepaspectratio]{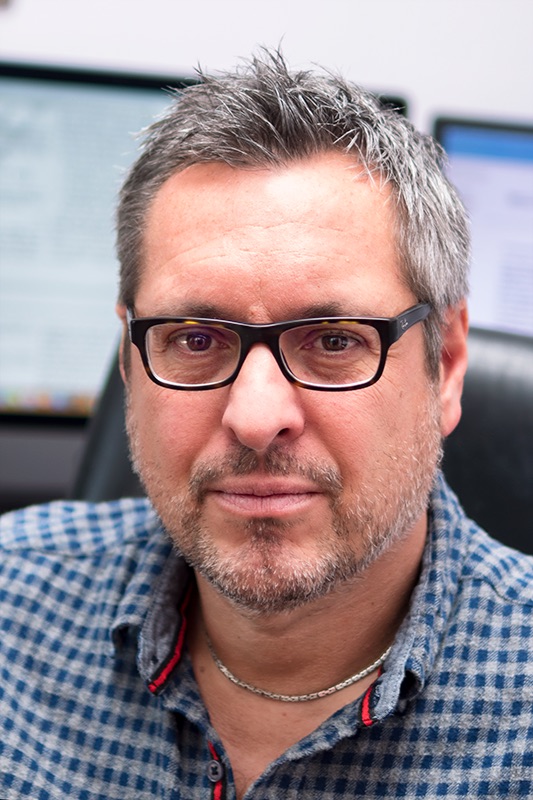}}]{Giuseppe C. Calafiore}
 (S '14, F '18) received the ``Laurea'' degree in Electrical Engineering from Politecnico di Torino in 1993, and the Ph.D. degree in Information and System Theory from Politecnico di Torino, in 1997. He is with the faculty of Dipartimento di Electronica e Telecommunicazioni, Politecnico di Torino, where he currently serves as a full professor and coordinator of the Systems and Data Science lab.
 He is  associated with the Italian National Research Council (CNR). 
Dr. Calafiore held several visiting positions at international institutions: at the Information Systems Laboratory (ISL), Stanford University, California, in 1995; at the Ecole Nationale Sup\'erieure de Techniques Avance\'es (ENSTA), Paris, in 1998; and at the University of California at Berkeley, in 1999, 2003 and 2007. He had an appointment as a Senior Fellow at the Institute of Pure and Applied Mathematics (IPAM), University of California at Los Angeles, in 2010. He had appointments as a Visiting Professor at EECS UC Berkeley in 2017 and at the Haas Business School in 2018 and 2019.
 Dr. Calafiore is the author of more than 180 journal and conference proceedings papers, and of eight books. He is a fellow member of the IEEE since 2018. He received the IEEE Control System Society ``George S. Axelby'' Outstanding Paper Award in 2008.  His research interests are in the fields of convex optimization, randomized algorithms, machine learning, computational finance, and identification and control of uncertain systems.
\end{IEEEbiography}

\begin{IEEEbiography}
    [{\includegraphics[width=1in,height=3truecm,clip,keepaspectratio]{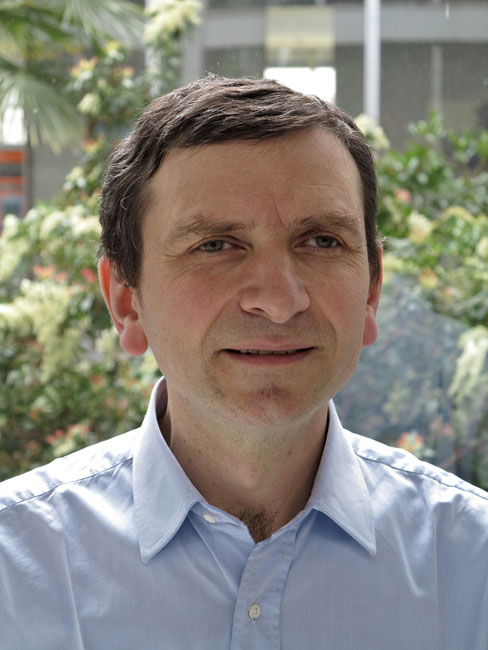}}]{Stephane Gaubert}
     (M '18)
obtained the Engineer degree from \'Ecole
Polytechnique, Palaiseau, in 1988. He got a PhD degree in Mathematics
and Automatic Control from \'Ecole Nationale Sup{\'e}rieure des Mines de
Paris in 1992. He is senior research scientist (Directeur de
Recherche) at INRIA Saclay -- \^Ile-de-France and member of CMAP (Centre
de Math\'ematiques Appliqu\'ees, \'Ecole Polytechnique, CNRS), head of a joint
research team, teaching at \'Ecole Polytechnique.  He coordinates
the Gaspard Monge corporate sponsorship Program for Optimization and Operations Research (PGMO),
of Fondation Math\'ematique Hadamard, Paris-Saclay. 
His interests include tropical geometry, optimization,  game theory, monotone or nonexpansive
dynamical systems, and applications of mathematics to decision making and to the verification of programs
or systems. 
\end{IEEEbiography}

\begin{IEEEbiography}
    [{\includegraphics[width=1in,height=3truecm,clip,keepaspectratio]{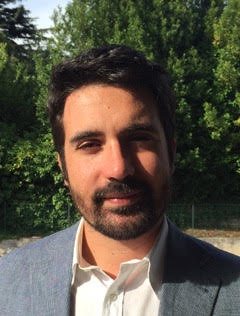}}]{Corrado Possieri}
received his bachelor's and master's degrees in Medical engineering and his Ph.D. degree in Computer Science, Control and Geoinformation from the University of Roma Tor Vergata, Italy, in 2011, 2013, and 2016, respectively. 
From September 2015 to June 2016, he visited the University of California, Santa Barbara (UCSB). 
Currently, he is Assistant Professor at the Politecnico di Torino.
He is a member of the IFAC TC on Control Design.
His research interests include stability and control of hybrid systems, the application of computational algebraic geometry techniques to control problems, stochastic systems, and optimization.
\end{IEEEbiography}

\end{document}